\tikzset{every picture/.style={line width=0.75pt}}
\newenvironment{alg}{\newline\newline
\noindent
{\bf Algorithm}} 
{\hfill\newline\newline}
\newcommand{\XX}{\mathbb{X}}
\newcommand{\RR}{\mathbb{R}}
\newcommand{\Dg}{\mathrm{Dg}}
\newcommand{\reach}{\mathrm{rch}}
\newcommand{\cU}{\mathcal{U}}
\newcommand{\cV}{\mathcal{V}}
\newcommand{\Map}{\mathrm{M}_{\cU,\delta}}
\newcommand{\dbMap}{\mathrm{M}_{\cU,\delta,K}}
\newcommand{\cMap}{\mathrm{M}_{\cU}}
\newcommand{\cMMap}{\bar{\mathrm{M}}_{\cU}}
\newcommand{\cdbMap}{\mathrm{M}_{\cU,K}}
\newcommand{\cdbMMap}{\bar{\mathrm{M}}_{\cU,K}}
\newcommand{\Mapdelta}{\mathrm{M}_{\cU,K,\delta}(\XX_n, f)}
\newcommand{\Rip}{\mathrm{Rips}_\delta}
\newcommand{\MapRip}{\mathrm{M}_{r,g,K}(\Rip,f)}
\newcommand{\Ext}{\mathrm{Ext}}
\newcommand{\LZZ}{\mathrm{LZZ}}
\newcommand{\KCZZ}{\mathrm{KCZZ}}
\definecolor{codegreen}{rgb}{0,0.6,0}
\definecolor{codegray}{rgb}{0.5,0.5,0.5}
\definecolor{codepurple}{rgb}{0.58,0,0.82}
\definecolor{backcolour}{rgb}{0.95,0.95,0.95}
\lstdefinestyle{mystyle}{
    backgroundcolor=\color{backcolour},   
    commentstyle=\color{codegreen},
    keywordstyle=\color{codepurple},
    numberstyle=\tiny\color{codegray},
    stringstyle=\color{codegreen},
    basicstyle=\ttfamily\footnotesize,
    breakatwhitespace=false,         
    breaklines=true,                 
    captionpos=b,                    
    keepspaces=true,                 
    numbers=left,                    
    numbersep=5pt,                  
    showspaces=false,                
    showstringspaces=false,
    showtabs=false,                  
    tabsize=2
}
\title{Improving Mapper's Robustness by Varying Resolution According to Lens-Space Density}
\author{Kaleb D. Ruscitti, Leland McInnes}
\author{\name Kaleb D. Ruscitti \email kaleb.ruscitti@uwaterloo.ca \\
       \addr Department of Pure Mathematics\\
       University of Waterloo\\
       Waterloo, ON, Canada
       \AND
       \name Leland McInnes \email leland.mcinnes@gmail.com \\
       \addr Tutte Institute for Mathematics and Computing\\
       Government of Canada\\
       Ottawa, ON, Canada
       }
\begin{document}

\maketitle
\begin{abstract}
    We propose a modification of the Mapper algorithm that removes the assumption of a single resolution scale across semantic space and improves the robustness of the results under change of parameters. Our work is motivated by datasets where the density in the image of the Morse-type function (the lens-space density) varies widely. For such datasets, tuning the resolution parameter of Mapper is difficult because small changes can lead to significant variations in the output. By improving the robustness of the output under these variations, our method makes it easier to tune the resolution for datasets with highly variable lens-space density.
    
    This improvement is achieved by generalising the type of permitted cover for Mapper and incorporating the lens-space density into the cover. Furthermore, we prove that for covers satisfying natural assumptions, the graph produced by Mapper still converges in bottleneck distance to the Reeb graph of the Rips complex of the data, while possibly capturing more topological features than a standard Mapper cover. Finally, we discuss implementation details and present the results of computational experiments. We also provide an accompanying reference implementation. 
\end{abstract}

\section{Introduction}

In data science, practitioners often work with a large, finite set of samples $\XX_n = \{x_1,...,x_n\}$ in $\mathbb{R}^d$, perhaps obtained by vectorizing real-world data. Examples include neural network embeddings of images or textual documents, as well as single-cell RNA expression data. Given such a set, it is useful to extract an underlying topological space $X$ from which these points are assumed to be sampled. One well-known technique for doing this is \emph{Mapper}, which employs a Morse-type function $f:X\to\mathbb{R}^l$ to construct a graph -- or more generally, a simplicial complex -- that represents $X$ \citep{singh_topological_2007}. Mapper offers several advantages; it is conceptually simple, and has been proven to converge asymptotically to the Reeb graph $R_f(X)$ in both interleaving distance and bottleneck distance \citep{munch_convergence_2016,carriere_statistical_2018}, and it produces finite graphs, which are computationally convenient to work with and can be visualised in 2 dimensions, even when $\XX_n$ lives in higher dimensions.

For these reasons, we applied Mapper to the following \emph{temporal topic modelling} application. Consider a corpus of images or textual documents that have been embedded into $\mathbb{R}^d$ through vectorization and dimension reduction, and that carry timestamps within a fixed interval. In this setting, topics or themes can be represented as clusters generated using a clustering algorithm. For a concrete example, the documents may consist of arXiv articles with submission dates, where clusters correspond to groups of articles on related subjects. Our question of interest is to understand how the clusters change over time, such as how clusters merge, split, form, or vanish. Mapper is a suitable tool to approach this problem, as with Morse-type function given by the timestamps, the graph output is naturally interpretable in terms of the changes in the topics over time.

In our initial attempts to apply Mapper to this temporal topic modelling problem, we encountered difficulties tuning the parameters for our datasets. These datasets often have widely varying frequency in different areas of $\mathbb{R}^d$. For example, in our arXiv dataset, we find that papers are submitted much more frequently in computer science topics than in number theory topics. This variation makes it difficult to chose the \emph{resolution} parameter $r$ of Mapper, which determines the scale of the topological features that Mapper can detect. Although one would want to take $r\to 0$ to capture all the features of the dataset, doing so would require infinitely many samples. In practice, there is a minimum suitable $r$ for a given set $\XX_n$, that must be chosen based on the distribution of the samples in the image of the Morse-type function. For our datasets, where time serves as the Morse-type function, the wide distribution of frequency across the dataset makes it difficult to select the resolution parameter appropriately.

In Mapper, the resolution parameter is applied uniformly across the sample set $\XX_n$. This global choice forces a compromise. A small value of $r$ may achieve high accuracy in densely sampled regions while compromising robustness in sparser areas, and a large value of $r$ does the opposite. In this article, we address this limitation by generalising Mapper to allow the resolution to vary across the dataset. Furthermore, we propose a method for computing these local variations directly from the data, thereby simplifying parameter selection for heterogeneous datasets.

There is previous research which aims to improve Mapper's parameter robustness by incorporating density information. \textit{MultiMapper} \citep{deb_multimapper_2018, dey_mutiscale_2016} varies the resolution parameter locally across the sample set by adjusting the cover of $f(X)$ rather than the cover of $X$ directly. In their approach, the resolution varies with respect to the value of the Morse function, meaning the number of degrees of freedom is equal to the dimension of $f(X)$. In contrast, our approach allows the resolution to vary in every direction inside $\RR^d$. Therefore, even for 1-dimensional Mapper we are able to vary in $d$ dimensions, getting a more robust cover of $X$. Another approach comes from the $F$- and $G$-Mapper proposals that aim to modify Mapper by making a better choice of open cover with fixed resolution \citep{alvarado_g-mapper_2023, bui_f-mapper_2020}. One could combine the strength of these methods -- first applying $G$-Mapper to select an optimal initial resolution, and then using our work to perturb $r$ locally -- and this may be an effective strategy for designing a more robust resolution selection method for Mapper. 

In Section 2, we generalise Mapper by introducing \emph{kerneled covers}, and propose variable-density kerneled covers to address the challenges described above. Then, in Section 3 we detail the reference implementation of our proposed algorithm. In Section 4, we prove that some convergence results for Mapper generalise to kerneled covers with only minor modifications. Finally, Section \ref{sec:experiments} presents the results of our computational experiments.  
 
\section{Lens-Space Density Sensitive Covers}
    \label{sec:kerneled-covers}

    Let $\XX_n=\{x_1,...,x_n\}$ denote samples taken from an unknown distribution on a topological space $X$, with known pairwise distances, and let $f:X \to \RR$ denote a Morse-type function. Our first goal is to define a class of open cover of the space $X$, that we call \emph{kerneled} covers. 

    \begin{definition}[Lens-space, Pullback Cover]
    For each $x_i\in \XX_n$, let $t_i := f(x_i)$. These all lie within the compact interval 
    $$L:=\left[\min\limits_{i=1,...n} t_i,~ \max\limits_{i=1,...,n} t_i\right] \subset f(X),$$
    that we call the \emph{lens-space}. If $f(X)$ is not compact, we replace $X$ by $X\cap f^{-1}(L)$. An open cover $\cU$ of $L$ induces the \emph{pullback cover} $f^{-1}(\cU)$ of $X$,
    \begin{equation*}
        f^{-1}\cU := \left\{ f^{-1}(U) ~|~ U\in\cU \right\}.
    \end{equation*}
    \end{definition}

  The Mapper graph associated
  to $(X,f,\cU)$ is a graph approximating the nerve $N(f^{-1}\cU)$. More concretely, we
  choose a clustering algorithm, cluster each set $f^{-1}(U_i)\cap\XX_n$, and
  label the clusters $\{{v^i}_j\}_{j=1}^{J_i}$. Then the Mapper (weighted) graph $G$ is the fully-connected graph with
  vertices given by
  \begin{equation*}
    V(G) := \bigcup_{i=1}^{N} \left\{{v^i}_j\right\}_{j=1}^{J_i},
  \end{equation*}
    with weight on the edge $({v^i}_j, {v^{i'}}_{j'})$ given by $\#\{x\in \XX_n ~|~ x\in {v^i}_j \cap {v^{i'}}_{j'}\}$. In practice, the cover $\cU$ will often be chosen so that $U_i\cap U_{i'}=\emptyset$ unless $i'=i+1$. In this case, the edge set becomes
\begin{equation*}
      E(G) = \bigcup_{i=1}^{N-1} \left\{ ({v^i}_j, {v^{i+1}}_{j'}) ~\big|~ j\in J_i, j'\in J_{i+1}\right\},
  \end{equation*}
    One limitation of using a pullback cover for Mapper, is that the resolution is determined by $\cU$, and therefore cannot vary within level sets of $f$. Kerneled covers will provide a larger class of covers that address this limitation, while preserving the good properties of Mapper.
    
    In the remainder of this section, we assume that the reader is familiar with some Morse theory as applied in the Mapper context. For a good introduction to the Morse theory used here, the reader is referred to Section 2 of \cite{carriere_structure_2018}.
\label{sec:theory}

\subsection{Kernel Perspective on Mapper}
  To help motivate the definition of kerneled covers, we start by re-writing the pullback cover of $\cU$ in terms of a kernel function. Let $\cU=\{U_i\}_{i=1}^N$ and let
  $\chi_i:L\to\{0,1\}$ be the characteristic function of $U_i$,
\begin{equation*}
    \chi_i(t) = \begin{cases}
        1, & t\in U_i,\\
        0, & t\not\in U_i.
    \end{cases}
\end{equation*}
  Then, we
  can rewrite:
  \begin{equation}
    \label{e:chiU}
    f^{-1}(U_i) = f^{-1}\left(\chi_i^{-1}(\{1\})\right).
  \end{equation}
  Every $U_i$ is an open interval with some midpoint $m_i \in \RR$ and width $w_i$, so that
  \begin{equation*}
    U_i = \left( m_i - \frac{w_i}{2},~ m_i + \frac{w_i}{2} \right),
  \end{equation*}
  and this lets us rewrite Equation \ref{e:chiU} by
  applying a square window function of width $w_i$, centered at $m_i$, to
  $\XX_n$. To be precise, for each $U_i\in \cU$, define $K_i(x):\XX_n\to\{0,1\}$ by
  \begin{equation}
  \label{e:squarekernel}
    K_i(x) := \begin{cases}
                1, & |f(x) - m_i| < w_i/2,\\
                0, & \text{otherwise.}
              \end{cases}
  \end{equation}
  Then $K_i(x) = \chi_i(f(x))$ and $f^{-1}(U_i) = K_i^{-1}(\{1\})$. The main idea behind our proposal is to replace $K_i$ with other functions, allowing us to obtain new covers that we can input into Mapper. These covers can be more general, such as incorporating local density of points in the lens-space.

  Another benefit of this framework is that it permits the construction of a cover of $X$ by fuzzy sets, which is desirable for our temporal topic modelling application. If we replace $K_i$ with a function $J_i$ that is valued in $[0,1]$, then one can interpret $J_i(x)$ as an inclusion strength of $x$ inside a fuzzy set $V_i \subset X$. However, to reduce the number of points being passed to the clustering algorithm when producing the Mapper graph, we will introduce an inclusion threshold $\epsilon \in [0,1)$. Then, we will define 
  $$V_i = J_i^{-1}\left((\epsilon, 1]\right),$$
  and under some assumptions, the set $\cV = \{V_i\}_{i=1}^N$ will form a cover of $X$. In the next section, we carefully define the class of functions $J_i$ and covers $\cV$ that have these desired properties.

\subsection{Kerneled Covers}

Now we will make precise the idea of a kernel function and its associated kerneled cover of $X$. To incorporate the local density of points in the lens-space into the cover, we want to define a kernel function that takes local density as one of its arguments. We also want these kernel functions to be centered at some value $t_0\in L$. That is, we want $$K:X\times L\times \mathbb{R}^{\geq 0} \to [0,1],$$
where the second argument is a value $t_0\in L$ denoting the midpoint or center of the kernel, and the third argument will be taken to be local density of $X$. 
To guarantee we obtain a cover of $X$, we require that the kernels
chosen have \emph{sufficient width} (Figure \ref{fig:sufficient-width}).

\begin{figure}
    \centering
    \includegraphics[width=\linewidth]{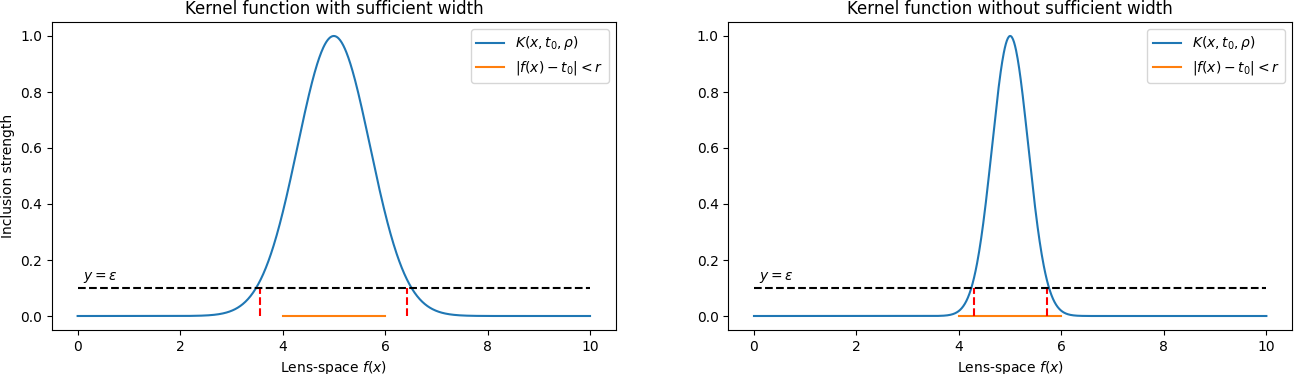}
    \caption{Plots of $K(x,t_0,\rho) \propto \exp(-[f(x)-t_0]^2/\rho^2)$, with $t_0=5$ and $\rho$ a constant. On the left, $\rho=1$ is chosen to achieve sufficient width for $(r,\epsilon)=(0.5,0.1)$. On the right, $\rho=1/4$ is chosen to not achieve sufficient width.}
    \label{fig:sufficient-width}
\end{figure}

\begin{definition}[$f$-kernel, sufficient width]
	Let $f:X\to\mathbb{R}$ be a Morse-type function and $L=f(X)$. Then an $f$-kernel function $K:X\times L \times \RR^{\geq 0} \to [0,1]$ is a  function such that:
	\begin{enumerate}
		\item For any $x$ with $f(x) = t_0$, and any $\rho>0$, one has $K(x,t_0,\rho) = 1$.
        \item Fixing $\rho>0$ and $t_0\in L$, $K(x,t_0,\rho)$ is continuous on the set $\{x\in X ~|~ K(x,t_0,\rho)>0\} .$ 
		\item For a fixed $t_0$, $K(x,t_0,\rho)$ is monotone non-increasing as $f(x)$ moves away from $t_0$.
	\end{enumerate}
    Given a pair $r>0$ and $\epsilon \in [0,1)$, we say an $f$-kernel $K$ has \emph{sufficient width} with respect to $(r,\epsilon)$, if  $|f(x)-t_0|<r \implies K(x,t_0,\rho) > \epsilon$. 
\end{definition}
\begin{example}
    Consider the square window function of Equation \ref{e:squarekernel}; for a fixed $w>0$, we can rewrite it in the form of an $f$-kernel function $K$ as
    \begin{equation*}
        K(x,t_0,\rho) := \begin{cases}
            1, & |f(x)-t_0| < w/2\\
            0, & \text{otherwise.}
        \end{cases}
    \end{equation*}
    Then for any $r\leq w/2$ and any $\epsilon\in [0,1)$, this kernel has sufficient width with respect to $(r,\epsilon)$. On the other hand, if we take $r>w/2$ then there is no $\epsilon$ for which this kernel has sufficient width. 
\end{example}

This is an unwieldly definition, so let us motivate it now. Using an $f$-kernel function $K$, we want to construct a open cover of $X$ sensitive to local density in the lens-space. To do this, we will select a set of midpoints $\{m_1,...,m_N\} \in L$, an inclusion threshold $\epsilon \in [0,1)$, and a density function $\rho:X\to\mathbb{R}^{\geq 0}$. Then, we will define 
\begin{equation*}
    V_i := \left\{
        x\in X ~|~ K(x,m_i,\rho(x)) > \epsilon
    \right\},\quad \mathrm{for}~~ i=1,...,N.
    \end{equation*}
To ensure that the collection of sets $\{V_i\}_{i=1}^N$ is indeed a cover of $X$, we will choose a resolution parameter $r>0$ and require that:
\begin{enumerate}
    \item For every $x\in X$ there is some $i$ such that  $|f(x)-m_i|<r$,
    \item $K$ has sufficient width with respect to $(r,\epsilon).$
\end{enumerate}
These requirements will ensure that $\{V_i\}_{i=1}^N$ is indeed an open cover of $X$. 

\begin{definition}
    Fix a topological space $X$ with Morse-type function $f:X\to \mathbb{R}^l$. Denote $L=f(X)$, and fix an $f$-kernel function $K$, a function $\rho:X\to \mathbb{R}^{\geq 0}$, a set $\{m_1,...,m_N\}\in L$ and $\epsilon\in(0,1]$. Then the \emph{kerneled cover} of $X$ associated to this data is the set $\cV$ consisting the open sets
    \begin{equation*}
        V_i := \{ x\in X ~|~ K(x,m_i,\rho(x)) > \epsilon \},
    \end{equation*}
    for $i=1,...,N$.
\end{definition}
\begin{proposition}
	Fix $X,f, K, \rho$ and $\epsilon$ as before. Fix a \emph{resolution} $r>0$ and a set $\{m_1,...,m_N\}\in L$ with that property that for every $x\in X$, there is some $i$ such that $|f(x)-m_{i}| < r$. Then, if $K$ has sufficient width with respect to $(r,\epsilon)$, the kerneled cover $\cV$ associated to this data is an open cover of $X$.
\end{proposition}
\begin{proof}
    Define $K_i:X\to \mathbb{R}$ as $K_i(x) = K(x,m_i,\rho(x))$. Then each $K_i$ is continuous, and therefore $V_i=K_i^{-1}(\epsilon,\infty)$ is an open set in $X$.
    Next, we will show that the sufficient width 
hypothesis guarantees that $f^{-1}(m_i-r, m_i+r)\subset V_i$. For $K$ to have sufficient width relative to $(r,\epsilon)$ it means that for every $i=1,...,N$, 
\begin{equation*}
    |f(x)-m_i|<r  \implies K_i(x)>\epsilon.
\end{equation*}
Therefore if $x\in f^{-1}(m_i-r,m_i+r)$ then $|f(x)-m_i| < r$ and hence by sufficient width, $K_i(x) > \epsilon$. 

Now because for all $x\in X$, there is an $m_i$ with $|f(x)-m_{i}|<r$, the intervals $\{(m_i-r,m_i+r)\}_{i=1}^N$ form an open cover of $L$, and thus $\{f^{-1}(m_i-r,m_i+r)\}_{i=1}^N$ is an open cover of $X$. Finally
\begin{equation*}
	X \subset \bigcup_{i\in I} f^{-1}(m_i-r,m_i+r) \subset \bigcup_{i\in I} V_i,
\end{equation*}
so $\cV$ is an open cover of $X$.
\end{proof}

To more easily compare pullback covers to kerneled covers, we also want to associate a kerneled cover to an open cover of $L$. Let $\cU=\{U_i\}_{i=1}^N$ be a generic open maximal interval cover, or \emph{gomic} of $L$, by which we mean a cover of open intervals where no more than two intervals intersect at a time and such that the \emph{overlap} $g$ of $U_j$ inside $U_i$, defined by
	\begin{equation*}
		g := \frac{\ell(U_i\cap U_j)}{\ell(U_i)}, \quad \text{ where } \ell \text{ is Lebesgue measure,}
	\end{equation*}
satisfies $g \in (0,1)$, for all $i,j$ with $U_i\cap U_j \neq \emptyset$ \cite[\S 2.5]{carriere_structure_2018}. Then each $U_i \in \cU$ is of the form 
\begin{equation*}
    U_i = \left(m_i - \frac{1}{2}\ell(U_i), ~ m_i+\frac{1}{2}\ell(U_i)\right), 
\end{equation*}
for some $m_i \in L$. This defines a set $\{m_1,...,m_N\}$ of midpoints associated to $\cU$ and a resolution $r=\max_{i=1}^N \ell(U_i)/2$. 
\begin{definition}
    Fix a cover $\cU$ of $L$ as described above, $K$ an $f$-kernel function, $\epsilon\in[0,1)$ and $\rho:X\to\mathbb{R}^{\geq 0}$. The \emph{kerneled cover} of $X$ associated to $(\cU, K, \epsilon,\rho)$ is the cover $\cV$ consisting of the sets
	\begin{equation*}
		V_i := \{x\in X ~|~ K(x,m_i,\rho(x))>\epsilon\},
	\end{equation*}
	for $i=1,...,N$. 
\end{definition}

\subsection{Varying the Cover with the Lens-Space Density}
\label{sec:lens-space-density}

\begin{figure}
\centering
\begin{minipage}{.5\textwidth}
  \centering
  \includegraphics[width=\linewidth]{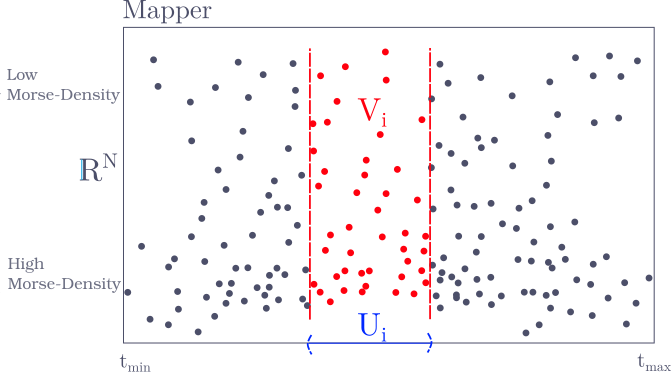}
  \captionof{figure}{Pullback open set $f^{-1}(U_i)$ of the\\ interval $U_i$, in red.}
  \label{fig:mapper-idea}
\end{minipage}%
\begin{minipage}{.5\textwidth}
  \centering
  \includegraphics[width=\linewidth]{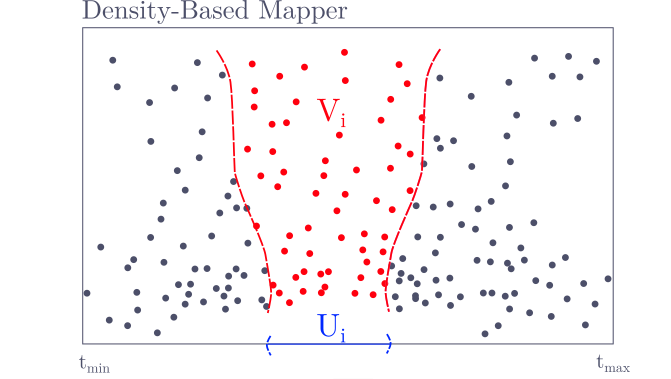}
  \captionof{figure}{Density-sensitive kerneled set corresponding to $U_i$}
  \label{fig:dbmapper-idea}
\end{minipage}
\end{figure}

With this framework, our goal is to define a kerneled cover $\cV$ whose constituent sets $V_i$ have size scaled according to the density of the points $f(\XX_n)$ inside $L$ (Figure \ref{fig:dbmapper-idea}). As $L$ is called the lens-space for the function $f$, we call the density of points in $f(\XX_n)$ the \emph{lens-space density}. This is why we included a $\rho$ dependence in the definition of an $f$-kerneled function, which remains unused so far.

As an (inverse) density estimate, we will use the distance in $L$ from $x$ to it's $k$th nearest neighbour in $X$, which we denote $\beta(x)$:
\begin{equation*}
    \beta(x) = |f(x)-f(x^{[k]})|,
\end{equation*}
where $x^{[k]}$ is the $k$-th nearest neighbour of $x$ in $X$. This can sometimes be 0, and can also get quite large, so instead of simply defining density as $\rho(x) = \frac{1}{\beta(x)}$, we want to introduce a transform $c:[0, \infty) \to [1, c_{\mathrm{max}}]$, and let $\rho(x) = c(\beta(x)).$

Although there are many reasonable choices, in the reference implementation we chose a normalized sigmoid function:
\begin{equation*}
    c(\beta) = c_{\mathrm{max}}\left(1 + \exp\left[  \frac{-(\beta-\mu)}{\sigma}\right]\right)^{-1},
\end{equation*}
where $\mu, \sigma$ are the mean and standard deviation of $\beta$ applied to $\XX_n$. This choice was made because this function is approximately linear within a standard deviation of $\mu$, meaning the width of the kernel will be most sensitive in the range where most of the data lies. However, the function is still bounded on both sides, allowing us to control the maximum and minimum densities, and therefore the maximum and minimum widths of the kernel.

The parameter $c_\mathrm{max}$ must be chosen for a given cover $\cU$, kernel $K$ and $(r,\epsilon)$ to guarantee that $K$ has sufficient width. For example, if we are using a Gaussian kernel, 
\begin{equation*}
    K(x,t_0,\rho) \propto \exp\left(-(f(x)-t_0)^2/2c(\beta(x))\right)
\end{equation*}
we can solve for a $c_\mathrm{max}$ which ensures $K(x,t_0,\rho)$ has sufficient width:
\begin{proposition}
	\label{proposition:gkernel}
	Let $\rho(x)=c(\beta(x))$ and suppose $K(x,t_0,\rho) = \exp\left[-\frac{(f(x)-t_0)^2}{2c(\beta(x))}\right]$, for some $\epsilon\geq0$ and $r>0$, with 
    \begin{equation*}
        c(\beta) = \frac{r^2}{-2\log(\epsilon)}\left(1+\exp\left[\frac{-(\beta-\mu)}{\sigma}\right]\right)^{-1}.
    \end{equation*}
    Then $K$ has sufficient width for $(r,\epsilon)$.
\end{proposition}
\begin{proof}
    Denote $\Sigma(x) = \left(1+\exp\left[\frac{-(\beta(x)-\mu)}{\sigma}\right]\right)^{-1}$ for convenience. Then suppose $x_0$ is such that $|f(x_0) - t_0| < r$. Using this inequality and
  the inequality $\Sigma(x) < 1$, we have:
	\begin{align*}
		K(x_0,t_0) &= \exp\left[ \frac{2\log(\epsilon)}{r^2} \frac{(f(x_0)-t_0)^2}{2\Sigma(x)}\right] \\
			   &> \exp\left[\frac{2\log(\epsilon)}{r^2}\frac{r^2}{2}\right]\\
			   &= \epsilon.
	\end{align*}
\end{proof}

Our motivation for allowing $K$ to be valued in $[0,1]$ is because it allows us to interpret the sets in a kerneled cover as fuzzy sets. Given a kerneled cover $\cV = \{V_i\}_{i=1}^{N}$ associated to $\{m_i\}_{i=1}^N, K,\rho,\epsilon$, we define the fuzzy sets $(V_i, K_i)$, where $K_i(x) = K(x,m_i,\rho(x))$. However, if one does not require this fuzzy set interpretation, there is always an $f$-kernel valued in $\{0,1\}$ that will generate the same kerneled cover as $K$. Thisl has the benefit of removing the choice of inclusion threshold $\epsilon >0$.
\begin{proposition}
    \label{prop:every-square}
    Let $\cV$ be the kerneled cover associated to midpoints $\{m_1,...,m_N\}\in L$, an $f$-kernel $K$, $\epsilon\in[0,1)$ and $\rho:X\to\mathbb{R}^{\geq 0}$. Let $S$ be the $f$-kernel
    \begin{equation*}
        S(x,t_0,\rho) = \begin{cases}
            1, & K(x,t_0,\rho)>\epsilon\\
            0, & \text{otherwise}.
        \end{cases}
    \end{equation*}
    Then the kerneled cover associated to $S$, with $\epsilon=0$, is equal to $\cV$
\end{proposition}
\begin{proof}
    By definition, the set $V_i\in \cV$ is $\{x\in X ~|~ K(x,m_i,\rho(x)) > \epsilon\}$. However $K(x,m_i,\rho(x))>\epsilon$ if and only if $S(x,m_i,\rho(x))>0$, and thus every set in the two kerneled covers are the same.
\end{proof}
Later on, we will need a generalization of the resolution parameter of Mapper:
\begin{definition}
	Let $\cV$ be an open cover of a topological space $X$, with Morse-type function $f:X\to L\subset \RR$. Let the \emph{resolution} of $\cV$ be 
	\begin{equation*}
		r := \sup_{V\in \cV}\left( \sup_{x,y\in V}\left|f(x)-f(y)\right| \right). 
	\end{equation*}
	When $\cV=f^{-1}(\cU)$ for a gomic $\cU$ of $L$, this reduces to the resolution for regular Mapper, $r= \sup_{ U\in \cU}\ell(U)$.
\end{definition}

\section{Implementation}

    Now we will explain how we integrated the ideas of the previous section into Mapper to design an algorithm that approximates the nerve of a kerneled cover. First, we will describe our proposed algorithm at a high-level, then elaborate on the intermediate steps. 

	Let $\XX_n \subset \RR^d$ be a set of samples and $f:\mathbb{R}^d\to \mathbb{R}$ a Morse-type function. Define $t_i := f(x_i)$, for each $x_i\in \XX_n$. Fix a clustering algorithm $\mathrm{CA}$, an $f$-kernel function $K$, and parameters $k,N\in\mathbb{N}$; $g\in (0,1)$.
\begin{alg}
	Density-based Mapper

	Input: $\left(\XX_n, \{t_1,...,t_n\}\right)$ as above, $f$-kernel $K$, parameters $N,g,k$, and a clustering algorithm $\mathrm{CA}$.
	
	Output: A weighted graph $G$.
	\begin{enumerate}[label=\arabic*.]
    \item Approximate the inverse lens-space density
      $\beta(x)$ for $x\in \XX_n$ and compute the normalized density $\rho(x)$.
		\item Compute an open cover of intervals $\cU = \{(m_i-r_i, m_i+r_i)\}_{i=1}^N$ that have overlap $g$.
		\item For each interval,
		\begin{enumerate}[label*=\alph*]
			\item Compute the kernel $K(x,m_i,\rho(x))$ for each $x\in \XX_n$,
        and define\\ $V_i = \{x ~|~ K(x, m_i,\rho(x)) > \epsilon \}$.
			\item Run the clustering algorithm CA to assign each point in $V_i$ a cluster label\\ $c_i(x)\in \{c_{i}^1,...,c_{i}^{J_i}\}$.
			\item Add vertices $\{c_{i}^1,...,c_{i}^{J_i}\}$ to $V(G)$.
		\end{enumerate}
		\item For $i=1,...,N-1$,	
		\begin{enumerate}[label*=\alph*]
			\item Compute the intersection $I_i = V_i \cap V_{i+1}$,
			\item For each point $x \in I_i$, add weight $K(x, t_i)$ to the edge between $c_i(x)$ and $c_{i+1}(x)$.
		\end{enumerate}
	\end{enumerate}
	Return $G=(V(G), E(G))$.
\end{alg}	
Detailing steps 1 and 2 will be the subject of the next two subsections. There is also a reference implementation available on GitHub:

\begin{center} \begin{verbatim} https://github.com/tutteinstitute/temporal-mapper \end{verbatim} \end{center}

\subsection{Approximating the Lens-Space Density}
	To approximate the lens-space density of $\XX_n$ near a sample $x$, we want to choose an open set $U$ around $x$, and then compute the density of points in $f(U)$.
    The choice of open sets affects the outcome. As the open sets get larger, the density is averaged across more of the dataset. On the other hand, if the open sets get too small relative to the density of the data, there may not be enough samples in each open set to get an accurate approximation. 
    For this reason, we want to choose open sets whose size varies with the density of the data. The standard technique for choosing sets with this behaviour is to use $k$ nearest neighbour distance. Choosing $U$ to be the $k$ nearest neighbour ball around $x$, the density of $f(U)$ will be $k/\ell(f(U))$, which we call the approximate lens-space density of $x$. 
    To avoid division-by-zero problems, we implement everything in terms of the \emph{inverse} approximate lens-space density. In summary:
\begin{alg} \quad

	Input: $(\XX_n, \{t_1,...,t_n\})$, and a parameter $k\in\mathbb{N}$.
	
	Output: For each $x\in \XX_n$, a value $\beta(x)\in\RR$.
	\begin{enumerate}[label=\arabic*.]
		\item For each $x_i \in \XX_n$, find the set of $k$ nearest neighbours, $\{x_{i_1},...,x_{i_k}\}$. Let $J_i := \{i_1,....,i_k\}$
		\item For each $x_i \in \XX_n$, define:
		\begin{equation*}
			\tilde{\beta}(x) = \left(\max\limits_{i_j \in J_i}(t_{i_j}) - \min\limits_{i_j \in J_i}(t_{i_j})\right)/k.
		\end{equation*}
		\item Smooth $\tilde{\beta}(x)$ by convolving with a window function $W(x,y)$;
		\begin{equation*}
			\beta(x) := \frac{1}{n}\sum\limits_{y\in \XX_n} \tilde{\beta}(y)
      W(x,y).
		\end{equation*}
	\end{enumerate}
\end{alg}
	The reference implementation uses a cosine window $$W(x,y)= \begin{cases}
	    \frac{1}{2}(1+\cos(\pi|x-y|/w)), & |x-y|\leq w \\
        0, & |x-y|>w
	\end{cases}$$
    for the last step, with width $w$ given by
  $$w^{(d)} =\left( \max\limits_{x\in \XX_n} x^{(d)}-
  \min\limits_{x\in \XX_n} x^{(d)}\right)/10,$$ 
  where $v^{(d)}$ denotes the $d$th component of the vector $v$.

\subsection{Selecting Intervals for the Open Cover of the Lens-Space}
\label{sec:covers}
    To build a kerneled cover, we will first select a set of midpoints $m_1,...,m_N$ and radii $r_1,...,r_N$. This defines a gomic $\cU$ of the lens-space $L$, containing the sets $U_i := (m_i-r_i,m_i+r_i)$. With this gomic and an $f$-kernel $K$, a kerneled cover can be defined as described in Section 2.
    
    There are multiple natural ways to select this data, and we will describe two here. Fix a number $N$ of open intervals, and an overlap parameter $g$. For any open interval $(a,b)=I_i \subset \RR$, let the midpoint be $m_i := (b-a)/2$. One choice is to take open intervals
  with fixed length; $\ell(I_i)=\ell(I_j)$ for all $i,j\in \{1,...,N\}$. In this case, the midpoints of the intervals will be evenly spaced in $L$, so we call this method \emph{Morse-spaced} cover selection. 
\begin{definition}
 \label{def:tscover}
	Let $L=[t_{\min}, t_{\max}]$, and fix $N\in\mathbb{N}$ and $g\in[0,1]$. Let $\Delta = (t_{\max} - t_{\min})/N$ and $m_i := t_{\min} + i\Delta$, for each $i \in \{1,..,n\}$. The \emph{Morse-spaced cover} with parameters $(N,g)$ is the open cover $\cU = \{I_1,...,I_N\}$ of $L$ defined by:
	\begin{equation*}
		I_i  = \left(  m_i-\frac{\Delta}{2}\left(1+\frac{g}{2}\right), \quad m_i + \frac{\Delta}{2}\left(1+\frac{g}{2}\right) \right)\cap L.
	\end{equation*}
	\vspace{0.1em}
\end{definition}
	The Morse-spaced cover has the advantage of producing graphs whose
  vertices are evenly spaced in $L$ and are therefore very interpretable in
  terms of the Morse-type function. However, for datasets where the distribution of samples is not very uniform in $L$, the intervals in a Morse-spaced cover can have a widely varying number of points in them. This can make it difficult to select appropriate parameters for the clustering algorithm $\mathrm{CA}$ used in the subsequent steps. For these datasets, we can make a mild interpretability sacrifice to mitigate this issue by choosing the intervals such that the number of data points in each set is the same. We call this \emph{data-spaced} cover selection.
\begin{definition}
	Fix $N\in\mathbb{N}$ and $g\in[0,1]$. Let $t_k = f(x_k),$ for $x_k\in \XX_n$. Sorting if necessary, suppose $t_k \leq t_{k+1}$ for all $k\in \{1,...,k\}$. 	Let $j_{i} = 1+\left\lceil\frac{i(n-1)}{N}\right\rceil$, for all $i\in\{0,...,N\}$. Then the \emph{data-spaced cover} with parameters $(N,g)$ is the open cover $\cU = \{I_1,...,I_{N}\}$ of $L$ defined by:
	\begin{equation*}
		I_i = \left( t_{j_{i-1}}\left(1-\frac{g}{2}\right),\quad t_{j_{i}}\left(1+\frac{g}{2}\right) \right).
	\end{equation*} 
\end{definition}
\begin{remark}
    The data-spaced cover can fail to be an open cover when there exists an $i$ such that $t_{j_{i-1}} = t_{j_i}$. This can happen if $N$ is very large or if there exists a region of $L$ with extremely high density.
\end{remark}

\subsection{Parameter Selection}
    When applying density-based Mapper to a dataset, one must make a choice of parameters $N, g, k$ and $f$-kernel $K$. In this section, we provide some heuristic rules to help practitioners choose reasonable values for these parameters. 

	The parameter that is most influential on the output graph is the number of midpoints $N$, which determines the \emph{resolution} $r \sim 1/N$. In the reference implementation, this is the \verb|N_checkpoints| parameter. In subsection \ref{ss:discrete-convergence} we show that the resolution determines which topological features visible in the output. For this reason, we want to make $N$ large. On the other hand, if $N$ is too large, we run into a limit imposed by the number of data points; some slices will have too few points to achieve good clustering and cluster overlaps.
    
    If the Morse function $f$ has a natural interpretation (e.g., time), practitioners can use their a priori understanding of the dataset to choose a reasonable $N$. Otherwise, $N$ should be chosen as large as permissible given your dataset size and computing constraints. One sign that $N$ is too large is when some intervals in the cover have too few points for reasonable clustering.

    \begin{remark}
    	In the reference implementation, \verb|N_checkpoints| determines the \emph{minimum} resolution of the kerneled cover. Therefore it is better to choose a larger $N$ than you would choose if using standard Mapper on the same data.
    \end{remark}
    
    For datasets that are approximately equally spaced in the lens-space, select Morse-spaced intervals. However, if there are intervals in the lens-space that contain significantly more or fewer data points, it may be difficult to  find clusters in those intervals that are consistent with the rest of the dataset. This inconsistency can produce artifacts in the resulting graph. To avoid this, you can use data-spaced intervals.
    
	The last parameter that determines the cover of the lens-space $\cU$ is the overlap parameter $g$. In the reference implementation, this is the parameter \verb|overlap|, which is scaled to lie within $(0,1)$. As with $N$, $g$ determines the \emph{minimum} overlap between intervals, so consider taking a smaller $g$ than you would with standard Mapper. In our experiments, we have found that the output graphs are fairly robust to changes in $g$. A reasonable default choice is $g=0.5$.

	Next, we have the parameter $k$, which is the number of nearest neighbours used in the  density computation. Increasing $k$ smooths out the inverse lens-space density $\beta(x)$. Moreover, the choice of $k$ is very influential on the runtime of the reference implementation; computing the $k$-nn graph is the slowest operation in density-based Mapper. For this reason, you want to take $k$ as small as you can get away with. Unfortunately, as you reduce $k$, the value of $\beta(x)$ becomes more and more sensitive to small changes in the relative position of the embedded samples $x_i$. This means that for low $k$, the results of the Mapper become less robust to changes in representation of the samples in $\mathbb{R}^d$. For datasets with 100k-1M points, we have empirically found that $100 \leq k \leq 250$ seems to be reasonable.

	Finally, there is the choice of the $f$-kernel function $K$. The simplest choice is to use square kernels whose width varies as a function of the lens-space density $\rho$. Proposition \ref{prop:every-square} demonstrates that if your clustering algorithm cannot use the value of $K$ to weigh input points, any kernel choice is equivalent to a square kernel. However, if you are using a clustering algorithm that can take weights for the points, such as \verb|scikit-learn.cluster.DBSCAN|, then the values of $K$ can be used as weights for the points to get more refined clusters. In the reference implementation, the Gaussian kernel from Proposition \ref{proposition:gkernel} is available as an alternative to the square kernel. 

	Using the reference implementation, one can recover standard Mapper by choosing a square kernel and setting the \verb|rate_sensitivity| parameter equal to 0. This will skip the density computation, and use square kernels that match the characteristic functions of the intervals in the cover of the lens-space. Doing this gives a kerneled cover equal to the pullback cover $f^{-1}(\cU)$, and the subsequent steps are the same as in standard Mapper.

\section{Convergence to the Reeb Graph}

    One appealing feature of Mapper is that its output is guaranteed to converge to the Reeb graph of the topological space in bottleneck distance, as the resolution approaches zero. In this section, we prove that this result is preserved by our proposed changes.  

    The analysis of Mapper uses the tools of persistence diagrams and zigzag modules from algebraic topology. First we will review these tools, and then generalise them to work for the kerneled covers that we introduced in Section \ref{sec:kerneled-covers}. This will allow us to put an upper bound on the bottleneck distance between a density-based Mapper graph and the Reeb graph of the underlying topological space. This bound is the content of Theorem \ref{thm:main-bound} at the end of this section.
    
\subsection{Persistence Diagrams and Bottleneck Distance}
	First, we review the definition of zigzag homology, persistence diagrams, and bottleneck distance, following \cite[\S 2.2-2.3]{carriere_structure_2018}.
    Let $X$ be a topological space, and let $f:X\to\mathbb{R}$ be Morse-type. Define the sublevel set $X^{(a)} := f^{-1}(-\infty,a]$.
    Then the family $\left\{X^{(a)}\right\}_{a\in \RR}$ defines a filtration, meaning $X^{(a)} \subseteq X^{(b)}$ for $a \leq b$. If we reverse the interval, letting $X^{(b)}_{op} = f^{-1}[b,\infty)$, we get an opposite filtration; we index these by $b\in \RR^{op}$. These filtrations can be `connected at infinity' as follows. Replace $X^{(b)}_{op}$ by the pair of spaces $(X, X^{(b)}_{op})$, and define $\RR_{\Ext} = \RR\cup\{\infty\}\cup\RR_{op}$, with order $a < \infty < \tilde{a}$ for all $a \in \RR$ and $\tilde{a}\in \RR_{op}$, and define the extended filtration to be 
\begin{equation*}
	X^{(a)}_\Ext := 
	\begin{cases}
		f^{-1}(-\infty, a], & a\in \RR\\
		X & a = \infty,\\
		\left(X, f^{-1}[a, \infty)\right), & a\in \RR_{op}\\
	\end{cases}.
\end{equation*}
Taking homology of this filtration defines the \emph{extended persistence module} $\mathrm{EP}(f)$. If the critical values of $f$ are $\{-\infty=a_0, a_1, a_2, \dots , a_n, +\infty\}$, then $\mathrm{EP}(f)$ is the sequence
\begin{equation*}
	0 \to H_\ast\left( X^{(a_0)} \right) \to \dots \to H_\ast\left(X^{(a_n)}\right) \to H_\ast(X) \to H_\ast\left(X^{(a_n)}_{\Ext}\right) \to \dots \to H_\ast\left(X^{(a_0)}_{\Ext}\right) \to 0.
\end{equation*}
When $f$ is Morse type, the extended persistence module decomposes into interval modules:
\begin{equation*}
	\mathrm{EP}(f) = \bigoplus_{k=1}^n \mathbb{I}[b_k, d_k),
\end{equation*}
where $\mathbb{I}[b_k, d_k) = \left(\RR\times[b_k, d_k)\right)\cup \left(\{0\} \times \left[\RR_\Ext - [b_k,d_k)\right]\right)$. Each summand represents the lifespan of a `homological feature', such as a connected component or non-contractible loop. The endpoints $b_k$ and $d_k$ are called the \emph{birth time} and \emph{death time} of the feature. This structure can be represented by plotting the points $(b_k,d_k)$ in $\RR^2$, and this plot is called the \emph{extended persistence diagram} of $(X,f)$ denoted $\Dg(X,f)$. 

    Next, we will define a pseudometric on persistence diagrams, called the bottleneck distance. First, we need the concept of \emph{partial matching}:
\begin{definition}
	A \emph{partial matching} between persistence diagrams $D$ and $D'$ is a subset $\Gamma$ of $D\times D'$ such that if $(p,p') \in \Gamma$ and $(p,q')\in\Gamma$ then $p'=q'$, and the same for $(p,p')$ and $(q, p')$. Furthermore, if $(p,p') \in \Gamma$, then the type (ordinary, relative, extended) of $p$ and $p'$ must match. \vspace{0em}
	
	Let $\Delta \subset \RR^2$ be the diagonal. The \emph{cost} of $\Gamma$ is
	\begin{equation*}
		\mathrm{cost}(\Gamma) := \max \left\{  \max_{p\in D} \delta_D(p), \max_{p'\in D'} \delta_{D'}(p') \right\},
	\end{equation*}
	where $\delta_D(p) = \|p-p'\|_{\infty}$ if $(p,p')\in\Gamma$ or $\delta_D(p) = \inf_{q\in\Delta}\|p-q\|_{\infty}$ otherwise.
\end{definition}
\begin{definition}[Bottleneck distance]
	The \emph{bottleneck distance} between $D$ and $D'$ is 
	\begin{equation*}
		d_\Delta(D',D') := \inf_{\Gamma} \mathrm{cost}(\Gamma),
	\end{equation*}
	ranging over all partial matchings $\Gamma$ of $D$ and $D'$.
\end{definition}
	We call $\delta_D(p)$ the transport cost of $p$. Note that if $p=(b,d)$ then 
    $$\inf_{q\in \Delta} \|p-q\|_{\infty} \leq \inf_{q\in\Delta} \|p-q\|_2 =  |b-d|.$$
    If $\delta_D(p) > \inf_{q\in \Delta} \|p-q\|_{\infty}$, then we can modify $\Gamma$ by removing $(p,p')$ to get a partial matching with lower cost. Therefore, when computing the bottleneck distance between $D$ and $D'$ we know it is bounded above by $\max\limits_{(b,d)\in D\cup D'} |b-d|$. 
	
\begin{proposition}
	Suppose $X$ is a topological space and that $f:X\to \RR$ is a Morse function. Then the endpoints of a persistence interval $[b, d)$ of $\mathrm{EP}(f)$ only occur at critical values of $f$.
\end{proposition}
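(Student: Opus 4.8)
The strategy is to show that, regarded as a module indexed by the totally ordered set $\RR_\Ext$, the extended persistence module $\mathrm{EP}(f)$ is \emph{locally constant} at every parameter that is not a critical value of $f$ (recall the critical values are $\{-\infty, a_1, \dots, a_n, +\infty\}$), and then to deduce the statement from the interval decomposition.

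First I would invoke classical Morse theory. If $[a,b]$ contains no critical value, then $f$ restricted to $f^{-1}[a,b]$ has no critical points, so by pushing along a gradient-like flow one obtains deformation retractions exhibiting $X^{(a)}$ as a deformation retract of $X^{(b)}$, $f^{-1}[b,\infty)$ as a deformation retract of $f^{-1}[a,\infty)$, and hence the pair $\bigl(X, f^{-1}[b,\infty)\bigr)$ as a deformation retract of $\bigl(X, f^{-1}[a,\infty)\bigr)$. (For a Morse-type $f$ this is part of the structural hypotheses: finitely many critical values, and constant homotopy type of the level, sublevel and superlevel sets over each interval between consecutive critical values.) Applying $H_\ast$, every structure map of $\mathrm{EP}(f)$ between two parameters lying in a common critical-value-free open interval is an isomorphism; the same holds in a neighbourhood of $\infty$, where for $a$ above all finite critical values one has $H_\ast(X^{(a)}) = H_\ast(X) = H_\ast\bigl(X, f^{-1}[a,\infty)\bigr)$ with identity maps. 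Thus $\mathrm{EP}(f)$ is locally constant away from $\{-\infty, a_1, \dots, a_n, +\infty\}$.

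Next I would use the interval decomposition $\mathrm{EP}(f) \cong \bigoplus_k \I[b_k, d_k)$, which exists and is essentially unique since $\RR_\Ext$ is order-isomorphic to a real interval and each homology group is finite dimensional. Fix $t_0$ that is not a critical value and choose an open interval $I \ni t_0$ on which all structure maps of $\mathrm{EP}(f)$ are isomorphisms. For $s < t_0 < t$ both in $I$ the map $\mathrm{EP}(f)_s \to \mathrm{EP}(f)_t$ is an isomorphism, and since under the decomposition this map is the direct sum of the corresponding maps of the summands $\I[b_k,d_k)$, each of those is an isomorphism as well. But if some $b_k$ equalled $t_0$, then (the summand being nontrivial forces $b_k < d_k$, so) taking $s < t_0$ and $t \in (t_0, d_k) \cap I$ gives the non-isomorphism $0 \to \RR$ on that summand; and if some $d_k$ equalled $t_0$, taking $s \in (b_k, t_0) \cap I$ and $t > t_0$ gives the non-isomorphism $\RR \to 0$. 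Either way we contradict local constancy at $t_0$. Hence every endpoint $b_k$, $d_k$ is a critical value of $f$.

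I expect the difficulty to be bookkeeping rather than conceptual: one must pin down exactly which tameness is being assumed about "Morse type" so that the deformation-retraction arguments go through and apply equally to the absolute sublevel, absolute superlevel, and relative parts; handle the half-open convention in $\I[b_k, d_k)$ and the distinction between ordinary, relative and extended bars uniformly; and treat the gluing point $\infty$ and the top junction between the sublevel filtration and the relative part without direction errors. None of this is deep, but it is where the care is needed.
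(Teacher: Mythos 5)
Your proposal is correct and takes essentially the same approach as the paper: both use the absence of critical values in an interval to produce a deformation retraction, hence isomorphisms of the structure maps, and conclude no interval endpoint can occur there. Your version is actually more careful than the paper's, which asserts directly that isomorphic homology groups preclude an endpoint, whereas you justify that final step explicitly via the direct-sum decomposition into interval modules and also handle the superlevel/relative part of $\RR_\Ext$ explicitly.
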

\begin{proof}
	Suppose that $b \in \RR_\Ext$,  and there is no critical point $x$ with $f(x)=b$. Let $t_0$ be the largest critical value of $f$ less than $b$. Then, because $f$ is Morse-type and there are no critical points in the interval $(t_0, b)$, $X^{(b)}$ deformation retracts onto $X^{(t_0)}$. Then because $X^{(b)}$ deformation retracts to $X^{(t_0)}$, they must have isomorphic homology groups, and thus $b$ cannot be the endpoint of a persistence interval. By the contrapositive, the endpoints of persistence intervals occur only at the critical values of $f$.
\end{proof}

Zigzag persistence modules are a generalization of persistence modules, where one permits some of the arrows in the sequence to go backwards \citep{carlsson_zigzag_2010}. In particular, we want to make use of the \emph{levelset zigzag persistence module}.
\begin{definition}
	Let $f:X\to\RR$ be Morse type, and let its critical values be labeled $\{a_1,\dots,a_n\}$. Pick any set of values $\{s_i\}_{i=0}^n$ such that $a_i < s_i < a_{i+1}$, where $a_0=-\infty, a_{n+1} = \infty$. Let $X^j_i = f^{-1}([s_i,s_j])$. Then the \emph{levelset zigzag persistence module} $\LZZ(X,f)$ is the sequence:
	\begin{equation*}
		H_\ast\left(X_0^0\right) \to H_\ast\left(X_0^1\right)
    \leftarrow H_\ast\left(X_1^1\right) \to H_\ast\left(X_1^2\right) \leftarrow
    \dots \to H_\ast\left(X_{n-1}^n\right) \leftarrow H_\ast\left(X_n^n\right),
	\end{equation*}
	with linear maps induced by inclusions of the topological spaces.
\end{definition}
As with the extended persistence module, the levelset zigzag module decomposes into a sum of intervals, and the disjoint union of all those intervals is called the levelset zigzag persistence barcode, denoted $\LZZ_{bc}(X,f)$.
\begin{proposition}
  \label{proposition:pd-bc-bij}
	For $X$ a topological space with Morse type function $f$, there exists a bijection between $\Dg(X,f)$ and $\LZZ_{bc}(X,f)$.
\end{proposition}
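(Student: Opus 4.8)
The plan is to realize both modules as paths through a single two-dimensional \emph{Mayer--Vietoris pyramid} and to pass between them using the diamond principle for zigzag modules, following \cite{carriere_structure_2018} and the earlier work of Carlsson, de Silva and Morozov on zigzag persistence of real-valued functions. Concretely, with the values $s_i$ interleaving the critical values $a_i$ as in the definition of $\LZZ$, I would arrange into a triangular array all of the groups $H_\ast\bigl(f^{-1}([s_i,s_j])\bigr)$ for $i\le j$, degenerating along the two slanted edges of the triangle to the sublevel sets $H_\ast\bigl(f^{-1}((-\infty,s_j])\bigr)$ and superlevel sets $H_\ast\bigl(f^{-1}([s_i,\infty))\bigr)$, then to the relative groups $H_\ast\bigl(X,f^{-1}((-\infty,s_j])\bigr)$ and $H_\ast\bigl(X,f^{-1}([s_i,\infty))\bigr)$, and finally to $H_\ast(X)$ at the apex, with all arrows induced by inclusions and connecting homomorphisms. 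By construction $\LZZ(X,f)$ is the zigzag read off along the bottom edge of this pyramid, while the extended module $\mathrm{EP}(f)$ is the monotone path that climbs the left edge through the sublevel sets, crosses the apex $H_\ast(X)$, and descends the right edge through the relative superlevel groups; its barcode is $\Dg(X,f)$ essentially by the interval decomposition of $\mathrm{EP}(f)$ recalled just above.

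The engine of the argument is the diamond principle. Each elementary square of the pyramid fits into a Mayer--Vietoris exact sequence: for Morse-type $f$, writing a closed interlevel set as the union of two overlapping closed interlevel sets that meet in a thin slab which is a deformation retract of one of its neighbourhoods gives an excisive couple, so the corresponding square of homology groups has the Mayer--Vietoris exactness property. The diamond principle then says that flipping a local piece $\bullet\to\bullet\leftarrow\bullet$ of the current path to $\bullet\leftarrow\bullet\to\bullet$ across such a square transforms the zigzag barcode by a completely explicit local rule on interval endpoints. Starting from the bottom-edge path ($\LZZ$) and performing a finite sequence of such flips, I would migrate the path upward until it becomes the left-up--apex--right-down path ($\mathrm{EP}$); composing the explicit endpoint transformations produces a bijection $\LZZ_{bc}(X,f)\to\Dg(X,f)$.

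It then remains to track types. An interval of $\LZZ_{bc}(X,f)$ carries one of four bracket decorations $[b,d]$, $[b,d)$, $(b,d]$, $(b,d)$ according to the directions of the arrows at its two ends, and one has to check that under the composite transformation these four classes map exactly onto the ordinary, relative and the two extended kinds of points of $\Dg(X,f)$ appearing in the partial matchings used to define the bottleneck distance above --- with a shift of homological degree by one for two of the four classes, originating from the connecting maps that one crosses whenever the migrating path passes a slanted edge of the pyramid.

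I expect the main obstacle to be exactly these two intertwined verifications: that Morse-type-ness genuinely makes every square of the pyramid Mayer--Vietoris exact (so the diamond principle applies uniformly, including near the slanted edges where pairs and connecting maps enter), and that the bracket decorations together with the degree shift are tracked correctly through the entire sequence of flips, so that the final bijection is type-faithful in the way the later stability arguments require. The self-contained alternative --- decomposing $\LZZ(X,f)$ into intervals and chasing each one through long exact sequences of pairs and Mayer--Vietoris sequences by hand --- is logically equivalent and incurs the same bookkeeping, so I would route everything through the pyramid.
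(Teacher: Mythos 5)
Your proposal is correct and takes essentially the same route as the paper: the paper's proof is simply the one-line citation ``Corollary of the Pyramid Theorem in \cite{carlsson_zigzag_2009},'' and what you have written is a faithful sketch of the internal argument of that very theorem (the Mayer--Vietoris pyramid with $\LZZ$ along the bottom edge and $\mathrm{EP}(f)$ along the left--apex--right path, connected by diamond-principle flips with the attendant bookkeeping of bracket types and degree shifts). The only difference is that you reprove the cited result rather than invoking it.
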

\begin{proof}
	Corollary of the Pyramid Theorem in \cite{carlsson_zigzag_2009}.
\end{proof}
It is through the levelset zigzag that we will relate the persistence diagrams of Mapper and the Reeb graph.

\subsection{Mapper Graphs from Zigzag Modules}

Let $X$ be a topological space and suppose $\cV
= \{V\}_{i=1}^N$ is a cover of $X$. Then we can form the
zigzag module
\begin{equation}
\begin{tikzcd}
	{H_0(V_1)} && {H_0(V_2)} && {H_0(V_N)} && {} \\
	& {H_0(V_1\cap V_2)} && \dots
	\arrow["{\phi^1_{1,2}}", from=2-2, to=1-1]
	\arrow["{\phi^2_{1,2}}", from=2-2, to=1-3]
	\arrow[from=2-4, to=1-3]
	\arrow["{\phi^{N}_{N-1,N}}", from=2-4, to=1-5]
\end{tikzcd}
\label{e:zigzag}
\end{equation}
The Mapper graph associated to this cover $\cV$ is a combinatorial graph
representation of this zigzag module. For each of the `upper' vector spaces,
that is, those of the form $H_0(V_i)$, we choose a basis $\{v^i_j\}_{j=1}^{J_i}$ of the
connected components of $V_i$. For each of the `lower' vector spaces, those of
the form $H_0(V_i\cap V_{i+1})$, we choose a basis $\{e^{i,i+1}_k\}_{k=1}^{K_i}$. An element $e^{i,i+1}_k$ represents a non-empty intersection between the connected components $\phi_{i,i+1}^i(e_k^{i,i+1})$ and $\phi_{i,i+1}^{i+1}(e_k^{i,i+1})$ inside $V_i$ and $V_{i+1}$ respectively.
\begin{definition}
  The \emph{multinerve Mapper graph} $\bar{G}$ associated to the zigzag module
  (\ref{e:zigzag}) is a multigraph defined by vertex set
  \begin{equation*}
    V(\bar{G}) = \bigcup_{i=1}^N \{v^i_1,...,v^i_{J_i}\},
  \end{equation*}  
  and edge set
  \begin{equation*}
    E(\bar{G}) = \bigcup_{i=1}^{N-1} \left\{\left(\phi_{i,i+1}^i(e^{i,i+1}_k),
  ~ \phi_{i,i+1}^{i+1}(e^{i,i+1}_k)\right), k=1,...,K_i\right\}
  \end{equation*}
  The Mapper graph $G$ is the graph obtained from $\bar{G}$ by identifying all
  the parallel edges into single edges. \emph{Standard Mapper} refers to the
  Mapper graph $G$ associated to the pullback cover $f^{-1}(\cU) = \{f^{-1}(U_i) ~|~
  U_i\in \cU\}$ of a gomic $\cU$ by a Morse-type function $f$.
\end{definition}

For point cloud data, we can form a discrete approximation of this graph. Let
$\XX_n$ be a set of samples taken from a space $X\subset \RR^d$. Let
$f:X\to \RR$ be a Morse-type function, and let $\Rip(\XX_n)$ denote the Rips
complex of $\XX_n$ with radius $\delta$. Recall that \emph{single-linkage}
clustering is a clustering algorithm for samples $\XX_n$ in which the clusters
are the connected components of $\Rip(\XX_n)$. Replacing $X$ with $\Rip(\XX_n)$
and forming the (multinerve) Mapper graph yields the \emph{discrete}
(multinerve) Mapper graph. More generally, one can replace single-linkage clustering with an arbitrary
clustering algorithm, and then replace connected components of the sets $V_i$
with clusters.

We let $\Map(\XX_n, f)$ denote the discrete Mapper graph constructed with the cover 
$f^{-1}(\cU)$ and single-linkage clustering. That is, the graph with vertices given by 
\begin{equation*}
	V(G) := \bigcup\limits_{U_i \in \cU} \left\{  \text{ connected components of } \Rip\left(f^{-1}(U_i)\right)  \right\},
\end{equation*}
and edges given by adding an edge between two vertices for each $x \in \XX_n \cap (U_i\cap U_j)$ which is in both connected components.

For any continuous manifold and Morse function $X,f$, the continuous
Mapper graph with respect to the cover $f^{-1}(\cU)$ is denoted $\cMap(X,f)$. The
continous multinerve Mapper graph is denoted $\cMMap(X,f)$. If $\{K_i\}_{U_i\in \cU}$ are
kernels defining the kerneled cover $\cV$ associated to $\cU$, then the
density-based continuous Mapper graph will be denoted $\cdbMap(X,f)$, and the
density-based continuous multinerve Mapper graph will be denoted
$\cdbMMap(X,f)$.

Furthermore, these graphs are themselves topological spaces, and we can define a
Morse-type function on them derived from $f$. For any open set $V\subset \RR^d$,
define the midpoint $m_V$ to be
\begin{equation*}
  m_V := \min_{x\in V}\left(f(x)\right) + \left[
    \max_{x\in V} f(x) - \min_{x\in V} f(x)
  \right]/2.
\end{equation*}
When $I\subset \RR$ is an interval and $V = f^{-1}(I)$, $m_V$ agrees with the
midpoint of the interval $I$. Let $G$ denote any of the Mapper graphs above. By
the definition of Mapper, any vertex $v\in V(G)$ is associated to an open set
$V\subset X$. Therefore, we can define $\bar{f}:V(G) \to \RR$ by $V\to m_V$, and
extend it to a function $\bar{f}:G\to\RR$ piecewise linearly. 
 
	One advantage of Mapper is that the graphs it produces converge in bottleneck distance to the Reeb graph in the asympototic limit, $n\to \infty$. Any modifications made to the algorithm should aim to preserve this property; here we verify that the density-based changes we made preserve this limit. In this section, we must make use of the algebraic machinery of zigzag persistence. For a concise introduction to the theory used here one can look at \cite[\S2]{carriere_structure_2018}. Here we will only recall the critical definitions and results, without proof. 

	To prove the convergence of density-based Mapper, we follow the proof of the convergence for Mapper: \cite[Theorem 2.7]{carriere_statistical_2018}. There, the proof is broken into lemmas, which we now modify for density-based Mapper. One key idea is to construct a specific zigzag persistence module, called the \emph{cover zigzag persistence module} \cite[Def. 4.4]{carriere_structure_2018} which relates the persistence diagram of the Reeb graph $R_f(X)$ and the Mapper graph $\cMap(X,f)$. The proof generalizes to kerneled covers after defining the correct \emph{kerneled cover zigzag persistence module} to generalize the cover zigag persistence module.

\subsection{Kerneled Cover Zigzag Persistence}

Everywhere in this section, let $f:X\to L\subset \RR$ be Morse-type, with critical values $\{-\infty = a_0, a_1,...,a_{n+1} = +\infty\}$, indexed in increasing order, and let $\cU = \{I_i\}_{i=1}^{N}$ be a gomic of $L$. Suppose each value $a_i$ corresponds to a unique critical point $x_{c_i}$. 

To prove the bottleneck distance convergence for Mapper, Carriere and Oudot define the \emph{cover zigzag persistence module}, whose barcode encodes the persistence diagram of Mapper. First we recall their definition, and then we generalize it to kerneled covers.

\begin{definition}\cite[Def. 4.4]{carriere_structure_2018}
  For any open interval $I$ with left endpoint $a$, we define the integers $l(I)$ and $r(I)$ by{
  \begin{align*}
    l(I) &= \mathrm{max}\{i : a_i \leq a\} &
    r(I) &= \mathrm{max}\left(l(I), \mathrm{max}\{i:a_i \in I\}\right).
  \end{align*}}
  Then we define the cover zigzag persistence module $\mathrm{CZZ}(f, \cU)$ by
  \begin{equation*}
     H_\ast \left(X^{r(I_1)}_{l(I_1)}\right) \leftarrow \dots \to H_\ast\left(X^{r(I_k)}_{l(I_k)}\right) \leftarrow H_\ast\left(X_{l(I_k\cap I_{k+1})}^{r(I_k\cap I_{k+1})}\right) \to H_\ast\left(X^{r(I_{k+1})}_{l(I_{k+1})}\right) \leftarrow \dots \to  H_\ast \left(X^{r(I_N)}_{l(I_N}\right) 
  \end{equation*}
\end{definition}
\begin{proposition}\cite[Lemma 4.5]{carriere_structure_2018} 
  With $f:X\to L$, and open cover $\cU$ as above, there is a bijection between $\Dg(\cMMap, \bar{f})$ and $\mathrm{CZZ}_{bc,0}(f,\cU)$.
\end{proposition}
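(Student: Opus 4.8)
The plan is to adapt the proof of Lemma 4.5 in \cite{carriere_structure_2018} to the multinerve setting, establishing the bijection by building a chain of isomorphisms between the $0$-dimensional barcode of the cover zigzag module and the persistence diagram of the multinerve Mapper graph equipped with $\bar f$. Recall that by Proposition~\ref{proposition:pd-bc-bij} the diagram $\Dg(\cMMap,\bar f)$ is in bijection with the levelset zigzag barcode $\LZZ_{bc}(\cMMap, \bar f)$, so it suffices to produce a bijection between $\LZZ_{bc,0}(\cMMap,\bar f)$ and $\CZZ_{bc,0}(f,\cU)$. The key observation is that the multinerve Mapper graph is, by construction, the combinatorial realization of the zigzag of $H_0$'s of the cover sets and their pairwise intersections, so $H_0$ of each piece of the levelset filtration of $\bar f$ should match $H_0$ of the corresponding piece $X_{l(I)}^{r(I)}$ of the cover zigzag.

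First I would unwind the definitions of $l(I)$ and $r(I)$ to show that $X_{l(I_k)}^{r(I_k)} = f^{-1}([s_{l(I_k)}, s_{r(I_k)}])$ is homotopy equivalent to $f^{-1}(I_k) = V_k$ as far as $\pi_0$ is concerned: both sets lie between the same pair of consecutive critical values on each side, so by the Morse-type deformation retraction argument (as in the proof that persistence intervals end only at critical values) they have the same connected components, and the same holds for the intersection terms $X_{l(I_k \cap I_{k+1})}^{r(I_k\cap I_{k+1})}$ versus $V_k\cap V_{k+1}$. This gives a levelwise isomorphism of the two zigzag modules in degree $0$, compatible with the structure maps because all maps are induced by inclusions and the deformation retractions can be chosen compatibly. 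Then I would invoke the fact that isomorphic zigzag persistence modules have equal barcodes (a consequence of the structure theorem for zigzag modules over a field, e.g.\ \cite{carlsson_zigzag_2008}), yielding $\CZZ_{bc,0}(f,\cU) = \LZZ_{bc,0}(\cMMap,\bar f)$, and finally compose with the bijection of Proposition~\ref{proposition:pd-bc-bij}.

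One subtlety to handle carefully is the role of the midpoint function $\bar f$ on the multinerve graph versus the restriction of $f$: the critical values of $\bar f$ are the midpoints $m_{V_i}$, not the $a_j$'s, and the levelset zigzag of $(\cMMap,\bar f)$ is indexed by intervals chosen between \emph{its} critical values. I would argue that the combinatorial structure of $\cMMap$ — vertices per interval $I_i$, edges per intersection $I_i\cap I_{i+1}$ — makes the levelset zigzag of $\bar f$ combinatorially identical to the cover zigzag's skeleton once one matches up the ordering of intervals with the ordering of midpoints, so the indexing discrepancy is cosmetic. The boundary cases $a_0 = -\infty$, $a_{n+1}=+\infty$ and the intervals at the ends of the gomic (which may be half-open after intersecting with $L$) also need a quick check that $l$ and $r$ behave sensibly.

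The main obstacle I anticipate is verifying that the levelwise $H_0$-isomorphisms genuinely commute with the zigzag structure maps — i.e.\ that the square relating an inclusion $V_k\cap V_{k+1}\hookrightarrow V_k$ on the Mapper side to the inclusion $X_{l(I_k\cap I_{k+1})}^{r(I_k\cap I_{k+1})}\hookrightarrow X_{l(I_k)}^{r(I_k)}$ on the cover-zigzag side actually commutes at the level of connected components. This requires being careful that the chosen deformation retractions of each cover set onto the relevant level-set slab are mutually compatible (nested appropriately), which in turn uses that the gomic has the genericity property that no more than two intervals overlap and each $a_j$ lies in the interior of the union in a controlled way. Since this is essentially the argument Carrière and Oudot give for standard Mapper, and the multinerve modification only changes how one passes from the zigzag module to the graph (not the module itself), I expect this to go through with the observation that nothing in their Lemma 4.5 argument used single-edge identification — indeed the multinerve graph is the more natural object for this correspondence.
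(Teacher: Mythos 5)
Your proposal is sound and follows essentially the same route as the source: the paper itself gives no proof here (it cites Lemma~4.5 of \cite{carriere_structure_2018} directly), but your strategy --- a levelwise identification of the degree-$0$ levelset zigzag of $(\cMMap,\bar f)$ with the cover zigzag module, compatible with the inclusion-induced structure maps, followed by the Pyramid-theorem bijection of Proposition~\ref{proposition:pd-bc-bij} --- is exactly the argument of the cited lemma and is also the argument the paper reuses verbatim for its kerneled generalization (the proposition identifying $\mathrm{LZZ}_0(\cdbMMap,\bar f)$ with $\mathrm{KCZZ}(f,\cV)$ and Proposition~\ref{thm:dg-kczz}). The subtleties you flag (compatibility of the deformation retractions with the zigzag maps, the reindexing between critical values of $f$ and of $\bar f$, and the boundary intervals of the gomic) are the right ones to check.
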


We generalize:

\begin{definition}
    \label{def:kczz}
	Let $\cU = \{U_i\}_{i\in[N]}$ be a gomic of $L$, and let $\cV = \{V_i\}_{i\in [n]}$ be a kerneled cover of $X$ associated to $\cU$. The \emph{kerneled cover zigzag persistence module} $\mathrm{KCZZ}(f,\cV)$ is 
	\begin{equation*}
		H_\ast(V_1) \leftarrow \dots \to  H_\ast(V_k) \leftarrow H_\ast(V_k\cap V_{k+1})
    \to H_\ast(V_{k+1}) \leftarrow \dots \to H_\ast(V_{N})
	\end{equation*}
\end{definition}
\begin{remark}
    If a set $V_k$ does not contain a critical point of $f$, then there is a deformation retract onto the previous set $V_{k-1}$. This gives an isomorphism:
    \begin{equation*}
        H_\ast(V_{k-1}) \cong H_\ast(V_{k-1}\cap V_k) \cong H_\ast (V_k).
    \end{equation*}
    In the remainder of this section, we will assume that we've reduced the kerneled cover zigzag persistence module by removing all such isomorphic terms. In particular, we assume therefore that each $V_k$ contains a critical point of $f$.
\end{remark}

For a square kernel, where $\cV$ is the pullback cover, this is isomorphic to the cover zigzag persistence module $\mathrm{CZZ}(f,\cU)$.
\begin{proposition}
	Let $X$ be a topological space with Morse-type function $f$, and with open covers $\cU$ and $\cV$ as in Definition \ref{def:kczz} above. Then
  $\mathrm{LZZ}_0(\cdbMMap,\bar{f})$ and $\mathrm{KCZZ}(f,\cV)$ are equal as zigzag persistence modules.
\end{proposition}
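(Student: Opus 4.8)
The plan is to show that the levelset zigzag module of the density-based multinerve Mapper graph $\cdbMMap$ (with respect to $\bar f$), restricted to $H_0$, is term-by-term and map-by-map equal to $\mathrm{KCZZ}(f,\cV)$. The strategy mirrors the pullback-cover case \cite[Lemma 4.5]{carriere_structure_2018}: unwind both definitions, identify the terms, and check that the inclusion-induced maps agree. First I would recall that $\LZZ_0(\cdbMMap,\bar f)$ is built from the critical values of $\bar f$ on the graph $\cdbMMap$. By construction, the vertices of $\cdbMMap$ are the connected components of the sets $V_k$, and after the reduction in the remark each $V_k$ contains exactly one critical point $x_{c_k}$ of $f$; the piecewise-linear extension $\bar f$ then has, up to the usual reindexing, one critical value per cover element, alternating with the midpoints of the intersections $V_k\cap V_{k+1}$. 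So the indexing set of the levelset zigzag of $\bar f$ is in natural order-preserving bijection with the sequence $V_1, V_1\cap V_2, V_2, \dots, V_N$ of $\mathrm{KCZZ}(f,\cV)$.

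Next I would establish the term-wise equality. For each cover element $V_k$, the levelset slice of $\cdbMMap$ around the corresponding critical value of $\bar f$ is, by the definition of the PL function $\bar f$ and the definition of edges of $\bar G$, homeomorphic to the subgraph spanned by the vertices coming from $V_k$ together with their incident half-edges; its $H_0$ is the free vector space on the connected components of $V_k$, which is precisely $H_0(V_k)$ (here one uses that $\cdbMMap$ is, by definition, the multinerve graph of the cover $\cV$, so its vertices over $V_k$ are in bijection with $\pi_0(V_k)$). Likewise, the slice around the midpoint value between the $V_k$- and $V_{k+1}$-critical values is the set of edges joining $V_k$-vertices to $V_{k+1}$-vertices, whose $H_0$ is the free vector space on $\pi_0(V_k\cap V_{k+1})$, i.e. $H_0(V_k\cap V_{k+1})$. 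The maps of $\LZZ$ are induced by inclusions of slices; tracing through, the map $H_0(V_k\cap V_{k+1})\to H_0(V_k)$ sends the class of a component $e$ of the intersection to the class of the component of $V_k$ containing it, which is exactly $\phi^k_{k,k+1}$, and this is by definition the map of $\mathrm{KCZZ}$.

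The main obstacle I expect is bookkeeping around the PL extension $\bar f$ and the reindexing of critical values: one must check that no spurious critical values of $\bar f$ appear (e.g. at vertices that are not local extrema of $\bar f$ along the graph), that the reduction removing $V_k$ with no critical point of $f$ exactly matches the reduction of $\LZZ$ removing levelset slices with no homological change, and that the order of the midpoints $m_{V_k}$ and $m_{V_k\cap V_{k+1}}$ is interleaved correctly — this uses that $\cU$ is a gomic (at most two intervals overlap at a time) together with the monotonicity and sufficient-width hypotheses on the kernels, which force $V_k\cap V_{k+1}$ to sit "between" $V_k$ and $V_{k+1}$ in the $f$-direction. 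Once these compatibility checks are in place, the equality of the two zigzag modules is immediate, since both are literally the sequence of $H_0$'s of the same spaces connected by the same inclusion-induced maps. I would close by remarking that, unlike the pullback case where $\mathrm{KCZZ}(f,\cV)\cong\mathrm{CZZ}(f,\cU)$ only up to isomorphism through the $X^{r}_{l}$ description, here the identification with $\LZZ_0(\cdbMMap,\bar f)$ is an equality because both sides are defined directly in terms of the cover $\cV$.
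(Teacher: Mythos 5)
Your proposal follows essentially the same route as the paper's proof: identify the even (critical-slice) terms of $\LZZ_0(\cdbMMap,\bar f)$ with $H_0(V_k)$ via the vertex--component correspondence, the odd (regular-slice) terms with $H_0(V_k\cap V_{k+1})$ via the edges, and use the reduction so that each remaining $V_k$ contains a critical point. If anything, you are slightly more thorough than the paper, since you explicitly check that the inclusion-induced maps coincide with the structure maps $\phi^k_{k,k+1}$, whereas the paper's argument only verifies the term-wise identification.
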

\begin{proof}

    Let $m$ be the number of critical points of $(\cdbMMap, \bar{f})$, so that $\mathrm{LZZ}_0(\cMMap, \bar{f})$ contains $2m+1$ terms. By the definition of $\bar{f}$, these can occur only at vertices of $\cdbMMap$, which are connected components of open sets in $\cV$. Therefore, if $\bar{f}$ has a critical point at a vertex $v^i_j$ inside set $V_i$, then $V_i$ contains a critical point of $f$. This implies that there are at least $m$ many $V_i$ containing critical points, and therefore $\mathrm{KCZZ}(f,\cV)$ has at least $2m+1$ terms. 

    However, some of these terms will be isomorphic. Suppose that $V_i$ contains a critical point which is inside a connected component ${v^i}_j$, but that ${v^i}_j$ is not a critical point of $\bar{f}$. Then this implies that there exists a deformation retract of $\cdbMMap$ to the previous critical point of $\bar{f}$, ordered in terms of $\bar{f}$, say ${v^{k}}_l$. This deformation retract induces an isomorphism $H_\ast (V_j) \cong H_\ast(V_i)$ for $k\leq j\leq i$. If $k=i$ then $V_i$ is one of the $m$ sets counted above, and otherwise it is isomorphic to one. Therefore the number of non-isomorphic terms in $\mathrm{KCZZ}(f,\cV)$ is $2m+1$. Now it remains to show that the $2m+1$ terms of $\mathrm{KCZZ}(f,\cV)$ and $\mathrm{LZZ}_0(\cMMap, \bar{f})$ are equal. 
    
  The even terms of the zeroth dimension levelset zigzag have the form:
  \begin{equation*}
    H_0(X_i^{j+1}) = H_0\left(\bar{f}^{-1}[s_i,s_{i+1}]\right).
  \end{equation*}
  By definition, there is one critical value of $\bar{f}$, $a_{i+1}$, in the
  interval $[s_i, s_{i+1}]$. Critical values of $\bar{f}$ occur only at vertices
  of $\cdbMMap(X,f)$; let $v_i$ be the vertex with $\bar{f}(v_i) = a_{i+1}$. By
  definition of $\cdbMMap(X,f)$, $v_i$ is a connected component of 
  $V_{j_i} \subset X$ for some $j_i\in [N]$, and all the connected components of
  $V_{j_i}$ define vertices with $\bar{f} = a_i$. Therefore:
  \begin{align*}
    \dim H_0\left( \bar{f}^{-1}[s_i, s_{i+1}]\right) &= \#\{v\in
      V(\cMMap(X,f))~|~\bar{f}(v) = a_i\} \\
    &= \#\{\text{connected components of } V_{j_i}\} \\
    &=\dim H_0(V_{j_i}).
  \end{align*}
  Moreover, since $\KCZZ(f,\cV)$ only includes $V_k$ that contain a critical
  point, the value $a_{i+1}$ will be associated to a vertex in $V_{i+1}$.

  The odd terms of the levelset zigzag have the form
  \begin{equation*}
    H_0(X_i^i) = H_0\left(\bar{f}^{-1}(s_i)\right).
  \end{equation*}
  By construction, $s_i$ is not a critical value of $\bar{f}$. Suppose there is
  some vertex $v$ of $\cdbMMap(X,f)$ with $\bar{f}(v) = s_i$. Then $v$ is not a
  critical point, and hence $f^{-1}(s_i)$ is homotopy equivalent to $f^{-1}(s_i
  - \epsilon)$, for all sufficiently small $\epsilon > 0$. Hence we can, up to
  replacing $s_i$ by $s_i-\epsilon$, assume that there is no vertex with
  $\bar{f}(v) = s_i$. Following the argument from the even case, the critical
  values $a_i$ and $a_{i+1}$ can be associated to sets $V_i$ and $V_{i+1}$ in
  the cover $\cV$. Since $\bar{f}$ is defined by piecewise linear extension, the
  points in $\cdbMMap(X,f)$ which have a value of $s_i$ will lie on 
   edges connecting vertices associated to connected components in $V_i$ and
  $V_{i+1}$. Thus,
  \begin{align*}
    \dim H_0\left(\bar{f}^{-1}(s_i)\right) &= \#\{\text{edges between connected
    components of } V_i \text{ and } V_{i+1}\}\\ 
    &= \dim H_0(V_i\cap V_{i+1}).
  \end{align*}
  Thus each term of these two zigzag modules are isomorphic.

\end{proof}
\begin{proposition}
  \label{thm:dg-kczz}
  There exists a bijection between $\Dg(\cdbMMap, \bar{f})$ and
  $\KCZZ_{cb,0}(f, \cV)$.
\end{proposition}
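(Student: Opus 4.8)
The plan is to obtain the bijection as a composition of two bijections that are already in hand, mirroring the proof of Carrière--Oudot's Lemma 4.5 with $\CZZ$ replaced by $\KCZZ$. First I would apply Proposition \ref{proposition:pd-bc-bij} to the pair $(\cdbMMap, \bar{f})$: the density-based continuous multinerve Mapper graph is a finite graph, and $\bar{f}$ is the piecewise-linear extension of a function taking finitely many values on its vertex set, so $(\cdbMMap, \bar{f})$ is a genuine Morse-type pair and Proposition \ref{proposition:pd-bc-bij} yields a bijection between $\Dg(\cdbMMap, \bar{f})$ and $\LZZ_{bc}(\cdbMMap, \bar{f})$. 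Because $\cdbMMap$ is one-dimensional, every class recorded by the extended persistence diagram --- the ordinary and essential $H_0$ bars together with the $H_1$ loop bars --- is already accounted for by the zeroth levelset zigzag barcode $\LZZ_{bc,0}(\cdbMMap,\bar{f})$, exactly as in the treatment of standard Mapper in \cite{carriere_structure_2018}; so this first bijection may be taken with target $\LZZ_{bc,0}(\cdbMMap, \bar{f})$.

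Second, I would invoke the preceding proposition, which asserts that $\LZZ_0(\cdbMMap, \bar{f})$ and $\KCZZ(f, \cV)$ are \emph{equal} as zigzag persistence modules. Equal zigzag modules have the same interval decomposition, hence $\LZZ_{bc,0}(\cdbMMap, \bar{f}) = \KCZZ_{bc,0}(f,\cV)$ literally as multisets of bars. Composing this identity with the bijection from the first step produces the asserted bijection between $\Dg(\cdbMMap, \bar{f})$ and $\KCZZ_{bc,0}(f, \cV)$. Along the way one should note that the reduction made earlier in the section --- deleting the isomorphic terms so that every $V_k$ contains a critical point --- does not affect the barcode, since collapsing an isomorphism in a zigzag module leaves its interval decomposition unchanged.

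The substantive content has therefore already been discharged in the two previous propositions, and this proof is essentially bookkeeping. The one place where care is needed --- and where an error could creep in --- is the degree accounting in the Pyramid Theorem: one must be certain that the full extended persistence diagram of the graph $\cdbMMap$ corresponds to the \emph{degree-zero} levelset zigzag barcode and not to some mixture of homological degrees. This matching is dimension-specific for general complexes, but for one-dimensional complexes it is precisely the identification used for standard Mapper in \cite{carriere_structure_2018}, so it transfers verbatim to $\cdbMMap$. Hence the main obstacle is purely expository: stating the composition cleanly and confirming that $(\cdbMMap, \bar{f})$ meets the Morse-type hypotheses required to apply Proposition \ref{proposition:pd-bc-bij}.
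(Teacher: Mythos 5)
Your proposal is correct and follows exactly the paper's route: the paper's proof is literally "combine the previous result (the equality of $\LZZ_0(\cdbMMap,\bar{f})$ and $\KCZZ(f,\cV)$ as zigzag modules) with Proposition \ref{proposition:pd-bc-bij}," which is the composition you spell out. Your additional remarks on the degree-zero bookkeeping and the reduction of isomorphic terms are just a more careful exposition of the same argument.
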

\begin{proof}
  Combine the previous result with Prop. \ref{proposition:pd-bc-bij}.
\end{proof}

\subsection{Convergence of Continuous Density-Based Mapper}
Recall that our goal is to relate the diagram $\Dg(\cdbMMap, \bar{f})$ with the persistence
diagram of the Reeb graph, $\Dg(R_f(X), f)$. We have related $\Dg(\cdbMMap,
\bar{f})$ with the kerneled cover zigzag module, and so next we must relate
$\KCZZ(f,\cV)$ to $\Dg(R_f(X), f)$.

For standard multinerve Mapper, this relationship is given by the next theorem.
\begin{theorem}
  \label{thm:mmapperconv}
  Let $X$ be a topological space, and $f:X\to\RR$ a Morse-type function. Let
  $R_f(X)$ be the Reeb graph, and by abuse of notation, $f:R_f(X) \to \RR$ the
  induced map. Let $\cU$ be a gomic of $f(X)$, with resolution $r$. Then 
  \begin{equation*}
  \Dg(R_f(X),f) - \{(x,y) ~|~ |y-x| \leq r\}\subset  \Dg(\cMMap(X,f)) \subset \Dg(R_f(X), f)
  \end{equation*}
\end{theorem}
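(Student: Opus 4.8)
The plan is to reduce the statement to a comparison of zigzag persistence barcodes, following the structure of the argument in \cite{carriere_structure_2018}. By \cite[Lemma 4.5]{carriere_structure_2018}, recalled above (its density-based analogue being Proposition~\ref{thm:dg-kczz}), there is a bijection $\Dg(\cMMap(X,f),\bar f) \leftrightarrow \CZZ_{bc,0}(f,\cU)$. On the Reeb side, $R_f(X)$ has the same levelset connected components as $X$ and is a graph, so its levelset zigzag is concentrated in degree $0$ and coincides with $\LZZ_0(X,f)$; applying Proposition~\ref{proposition:pd-bc-bij} to $(R_f(X),f)$ then yields $\Dg(R_f(X),f) \leftrightarrow \LZZ_{bc,0}(X,f)$. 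Hence it suffices to prove the barcode sandwich
\[
\LZZ_{bc,0}(X,f) \setminus \{[b,d) : |d-b|\le r\} \;\subseteq\; \CZZ_{bc,0}(f,\cU) \;\subseteq\; \LZZ_{bc,0}(X,f),
\]
and then transport it through the two bijections; the image of the discarded short bars is exactly the diagonal strip $\{(x,y):|y-x|\le r\}$ that is subtracted in the statement.

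Next I would realize $\CZZ(f,\cU)$ as a controlled coarsening of $\LZZ(X,f)$. Both are zigzag modules of shape $\text{fat}\leftarrow\text{thin}\rightarrow\text{fat}\leftarrow\cdots$ assembled from spaces $f^{-1}([a,b])$ with inclusion-induced maps: in $\LZZ$ every ``fat'' slab spans exactly one critical value, whereas in $\CZZ$ the slab $X^{r(I_i)}_{l(I_i)}$ attached to $I_i\in\cU$ spans all critical values lying in $I_i$, and the terms $X^{r(I_i\cap I_{i+1})}_{l(I_i\cap I_{i+1})}$ play the role of the single-level ``thin'' terms. Genericity of the gomic (no triple overlaps, each overlap ratio in $(0,1)$) forces these index ranges to nest compatibly, so grouping the $\LZZ$-terms according to which cover interval they fall into realizes $\CZZ(f,\cU)$ as the output of a finite sequence of elementary moves on $\LZZ(X,f)$, each move being either an isomorphism of a term (no barcode change) or the amalgamation of two consecutive slabs into one. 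The key quantitative input is that $r=\sup_{U\in\cU}\ell(U)$ bounds the $f$-extent of every fat $\CZZ$-slab by $r$, up to the boundary adjustment between critical values and the chosen separating values $s_i$, absorbed as in \cite[\S 2.5]{carriere_structure_2018}.

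The final step is to track how persistence bars transform under these elementary moves. Since the barcodes of $\CZZ$ and $\LZZ$ have endpoints only at critical values of $f$ (for $\CZZ$ this again uses genericity, or a limiting argument when a critical value meets an interval endpoint), a direct interval-module computation shows that under an amalgamation move a bar $[b,d)$ either keeps both of its endpoints or is destroyed, and that it can be destroyed only when $b$ and $d$ fall into a common fat slab, which forces $|d-b|\le r$; moreover no amalgamation creates a new birth or death. Iterating over all the moves gives $\CZZ_{bc,0}(f,\cU)\subseteq\LZZ_{bc,0}(X,f)$ and, conversely, that every $\LZZ$-bar of length $>r$ survives into $\CZZ_{bc,0}(f,\cU)$, which is the required sandwich.

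I expect the main obstacle to be the middle step: making precise the correspondence between the $\LZZ$-terms and the $\CZZ$-terms — in particular handling the overlap terms $I_i\cap I_{i+1}$, the two boundary intervals of the gomic, and the edge case where a critical value lies close to an interval endpoint — and then justifying cleanly that an amalgamation window of $f$-width at most $r$ can destroy only bars of length at most $r$ and create none. The most economical route is probably to package the grouping as an honest morphism of zigzag modules and invoke a zigzag interleaving-stability statement rather than arguing move-by-move; in either formulation the genericity hypotheses on the gomic are exactly what make the combinatorial matching work.
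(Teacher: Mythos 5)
You are not mirroring a proof the paper actually gives: the paper disposes of this theorem in one line, quoting it as the first half of Corollary 4.6 of \cite{carriere_structure_2018}, so your proposal is in effect a re-derivation of that result of Carri\`ere and Oudot. Your architecture does match theirs: identify $\Dg(\cMMap(X,f),\bar f)$ with $\CZZ_{bc,0}(f,\cU)$ via their Lemma 4.5, identify $\Dg(R_f(X),f)$ with $\LZZ_{bc,0}(X,f)$ via the Pyramid Theorem (using that the Reeb quotient preserves $\pi_0$ of interlevel sets), and then compare the two zigzag barcodes; both reductions are sound, and your observation that the discarded bars have both critical-value endpoints in a single cover interval, hence land inside the strip $\{(x,y):|y-x|\le r\}$, is the right quantitative accounting.

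The genuine gap is exactly the step you flag as the obstacle, and neither of your two routes closes it as stated. The assertion that an amalgamation move either preserves both endpoints of a bar or destroys it is not a ``direct interval-module computation'': each such move is a Mayer--Vietoris (diamond) exchange followed by deletion of indices, and under the strong diamond principle bars whose endpoint lies at the exchanged position \emph{do} have that endpoint shifted; proving that these shifts never survive into the final barcode --- i.e.\ that every surviving bar keeps exactly its critical-value endpoints, which is what the set inclusion $\Dg(\cMMap(X,f))\subset\Dg(R_f(X),f)$ asserts --- is precisely the content of Carri\`ere--Oudot's staircase analysis (equivalently, of the block decomposition of interlevel-set persistence, of which both $\LZZ(X,f)$ and $\CZZ(f,\cU)$ are restrictions along different paths, each block restricting to a single bar or to zero). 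Your proposed fallback makes things worse rather than better: a zigzag interleaving/stability argument only bounds the bottleneck distance between the two diagrams, so it allows every point to be displaced by up to order $r$ and can never deliver the exact ``points removed, none moved'' inclusion claimed in the theorem. To finish honestly you must either invoke the structural theorem of \cite{carriere_structure_2018} (at which point you are back to the paper's one-line citation) or carry out the restriction-of-blocks argument explicitly, handling the four bar types (ordinary, relative, extended in degrees $0$ and $1$) separately to confirm that for the multinerve Mapper the killed bars are exactly those contained in a single cover interval.
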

\begin{proof}
  This is the first half of Corollary 4.6 of \cite{carriere_structure_2018}.
\end{proof}
\begin{remark}
    Theorem \ref{thm:mmapperconv} for Multinerve Mapper reduces to Mapper as well. Consider the projection $\pi:\cMMap(X,f) \to \cMap(X,f)$ by identifying all edges connecting the same pair of vertices. For 1d Mappers, this projection induces a surjection in homology \cite[Lemma 3.6]{carriere_structure_2018}. Using this, one can show
    $$\Dg(R_f(X),f) - \{(x,y) ~|~ |y-x| \leq r\}\subset \Dg(\cMap(X,f)) \subset \Dg(\cMMap(X,f)) \subset \Dg(R_f(X), f).$$
    The same argument applies when substituting in a kerneled cover. 
\end{remark}
Given a kerneled cover $\cV$ associated to an open cover $\cU$ of $L$, we can define two pullback covers:
\begin{definition}
	Let $X$ be a continuous manifold, $f:X\to L\subset \RR$ a Morse-type function,
  and $K$ an $f$-kernel of sufficient width with respect to some cover $\cU$ of
  $L$. Let $\cV$ be the cover associated to $(\cU, K)$. The
  \emph{maximally-coarse} cover $\cU^c$  of $L$ associated to this data is the
  set consisting of
	\begin{equation*}
		U^c_i := \left( \inf_{x\in V_i} f(x) ~,~ \sup_{y\in V_i} f(x)\right) 
	\end{equation*}
	for all $V_i \in \cV$, and we let $\cV^c$ denote its pullback cover of $X$.
  Analogously, the \emph{maximally-fine} cover $\cU^f$ of $L$ associated to this
  data is the set consisting of
	\begin{equation*}
		U^f_i := \left( \sup_{x\in V_i} f(x) ~,~ \inf_{y\in V_i} f(x)\right) 
	\end{equation*}
	for all $V_i\in \cV$, and we let $\cV^f$ denote its pullback cover of $X$.
\end{definition}

\begin{definition}
    Let $X$ be a topological space with Morse-function $f:X\to \mathbb{R}$. Here we call a finite open cover $\cV=\{V_i\}_{i=1}^{N}$ of $X$ \emph{regular}, if 
    \begin{enumerate}
        \item For all $1\leq i\leq N$, $\ell(V_i \cap V_{i+1}) > 0$, where $\ell$ denotes Lebesgue measure.
        \item For all $i\neq j \neq k$, $V_i\cap V_j\cap V_k = \emptyset$.
        \item $\cV$ has no proper subcovers.
    \end{enumerate}
    This is engineered so that the maximally coarse and fine covers $\cU^c$ and $\cU^f$ associated to $\cV$ will be gomics.
\end{definition}

Then, for $f:X\to L$, with a choice of $\cU$ and $K$, we have three covers of $X$, $\cV, \cV^f$ and $\cV^c$, which define three continuous Mappers of $X$, which we denote $\cdbMap(X,f)$, $\cMap^f(X,f)$ and $\cMap^c(X,f)$. Let $r_f$ and $r_c$ denote the resolution of $\cV^f$ and $\cV^c$ respectively. 
\begin{proposition}
	Suppose that $\cV=\{V_i\}$ and $\tilde{\cV} =\tilde{V}_i$ are two regular covers of $L$ with the same number of sets, and that $V_i \subset \tilde{V}_i$ for every $i$.  Then the number of  path connected components of $\tilde{V}_i$ is less than or equal to the number of path connected components of $V_i$ for every $i$.
\end{proposition}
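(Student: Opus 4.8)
The plan is to reduce the claim to a statement about the inclusion maps $\iota_i\colon V_i \to \tilde V_i$. It suffices to show that for each $i$ the induced map on sets of path-components $\pi_0(V_i)\to\pi_0(\tilde V_i)$ (sending a component of $V_i$ to the component of $\tilde V_i$ containing it) is \emph{surjective}: a surjection between finite sets cannot increase cardinality, so this immediately yields $\#\pi_0(\tilde V_i)\le \#\pi_0(V_i)$. Unwinding the definitions, I must prove that every path-component $\tilde C$ of $\tilde V_i$ contains at least one point of $V_i$. Since $X$ is a manifold it is locally path-connected, so every path-component of an open set is itself open (and closed in that set); I will use this throughout to treat components as clopen pieces.

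First I would record the ``chain'' structure of a regular cover: because $\tilde\cV$ is regular --- in particular it has no triple overlaps and no proper subcover, and its sets are indexed so that only consecutive ones meet --- the set $\tilde V_i$ is disjoint from $\tilde V_j$ whenever $|i-j|\ge 2$. This is precisely the property that guarantees the associated maximally-coarse and maximally-fine covers are gomics. Granting it, fix a path-component $\tilde C$ of $\tilde V_i$. Because $\cV$ is also a cover of $X$ and $V_j\subseteq\tilde V_j$, any point of $\tilde C$ lying in some $V_j$ forces $\tilde V_j\cap\tilde V_i\neq\emptyset$, hence $j\in\{i-1,i,i+1\}$; therefore $\tilde C\subseteq V_{i-1}\cup V_i\cup V_{i+1}$.

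Now assume, for contradiction, that $\tilde C\cap V_i=\emptyset$, so $\tilde C\subseteq V_{i-1}\cup V_{i+1}$. By the chain structure these two sets are disjoint and open, and $\tilde C$ is connected, so $\tilde C$ lies entirely inside one of them, say $\tilde C\subseteq V_{i+1}$. The goal is then to contradict the maximality of $\tilde C$ inside $\tilde V_i$. The intended mechanism: $\tilde C$ is clopen in $\tilde V_i$, so its frontier in $X$ avoids $\tilde V_i$; but $\tilde C\subseteq V_{i+1}\subseteq\tilde V_{i+1}$ places every point of $\tilde C$ in the overlap $\tilde V_i\cap\tilde V_{i+1}$. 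Chasing a frontier point of $\tilde C$ through the cover $\cV$ --- and using minimality of $\cV$ to ensure that $V_i$ genuinely protrudes into this overlap region rather than being absorbed by $V_{i-1}$ and $V_{i+1}$ --- should yield a path inside $\tilde V_i$ joining $\tilde C$ to a point of $V_i$, contradicting $\tilde C\cap V_i=\emptyset$. An alternative route is to push everything to the Reeb graph $R_f(X)$, where path-components of $f^{-1}$ of an interval correspond to path-components of the preimage in $R_f(X)$, reducing the problem to the finite combinatorics of intervals along a graph.

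The step I expect to be the main obstacle is exactly this last one: ruling out a ``runaway'' path-component of $\tilde V_i$ that is confined to the overlap with a neighbour and never meets $V_i$. All three regularity conditions have to be used together here --- local path-connectedness to propagate along components, the chain and no-triple-overlap structure to confine $\tilde C$ to at most one neighbour, and minimality to keep $V_i$ from being hidden inside its neighbours --- and making this precise is where the real work lies; everything before it is bookkeeping with the cover axioms.
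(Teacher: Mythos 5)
Your reduction is the right one, and it is essentially the same counting argument the paper uses, just read in the opposite direction: you want surjectivity of the inclusion-induced map $\pi_0(V_i)\to\pi_0(\tilde V_i)$, while the paper argues that distinct path components $C,D$ of $\tilde V_i$ meet $V_i$ in pieces lying in distinct components of $V_i$ (a path in a single component of $V_i$ joining them would lie in $\tilde V_i$ and merge $C$ with $D$). Both formulations bound $\#\pi_0(\tilde V_i)$ by $\#\pi_0(V_i)$ only if every component of $\tilde V_i$ actually contains a point of $V_i$. That is exactly the step you flag as ``the main obstacle,'' and your proposal does not close it: the frontier-chasing sketch ends with ``should yield a path inside $\tilde V_i$ joining $\tilde C$ to a point of $V_i$,'' which is the conclusion restated, not an argument. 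So as written the proposal has a genuine gap at the decisive point. (Two secondary issues: the ``chain structure'' $\tilde V_i\cap\tilde V_j=\emptyset$ for $|i-j|\ge 2$ is not one of the stated regularity axioms and does not follow from ``no triple intersections'' alone for general open sets in $X$; and the suggested detour through the Reeb graph is not developed.)

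It is worth saying that you have put your finger on a real weakness rather than invented one: the paper's own proof silently assumes $C\cap V_i\neq\emptyset$ for every component $C$ of $\tilde V_i$ (it writes ``either $C\cap V_i$ is connected, or it has connected components $C_1,\dots,C_K$'' without entertaining the empty case), so it elides precisely the step you could not complete. Indeed, with only the stated hypotheses the surjectivity can fail: if $X$ has a piece far from $V_i$ whose $f$-values nonetheless land in the relevant range, $\tilde V_i$ can acquire a path component disjoint from $V_i$ (covered instead by a neighbouring $V_{i\pm 1}$), and then $\#\pi_0(\tilde V_i)>\#\pi_0(V_i)$ is possible. Closing the gap requires an additional hypothesis or argument tying each component of $\tilde V_i$ back to $V_i$ — for instance that $\tilde V_i$ deformation retracts onto, or at least has every component meeting, $V_i$ — and neither your sketch nor the paper supplies it. Your honest identification of the obstruction is the most valuable part of the proposal; the proof itself is incomplete.
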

\begin{proof}
	 Let $C$ be a path connected component of $\tilde{V}_i$. Then either $C\cap V_i$ is connected, or it has connected components $C_1,...,C_K$. If $C_i$ and $D_j$ are connected components of $\tilde{V}_i$, then $C_i\cap V_i$ and $D_j\cap V_i$ must not be in the same connected component of $V_i$: Suppose for contradiction that they were in the same connected component $E$. Then there is a path $\gamma$ from some point $p_c \in C_i \cap V_i$ to some point $p_d \in D_j \cap V_i$. However, since $\gamma \subset E \subset V_i \subset\tilde{V}_i$, this path would also connect $C$ and $D$, contradicting the hypothesis.
\end{proof}
In particular, for any kerneled cover $\cV$, we have that $V_i^f \subset V_i \subset V^c_i$ for all $i$.

\begin{definition}
	Suppose that $\cV=\{V_i\}$ and $\tilde{\cV} =\tilde{V}_i$ are two regular
  covers of $L$ with the same number of sets, and that $V_i \subset \tilde{V}_i$
  for every $i$. Then the \emph{coarse-node identification map}
  $\phi:\cdbMap(X,f) \to \cMap(X,f)$ is the graph morphism defined as follows.
  Any vertex $v \in \cdbMap(x,f)$ is a connected component of an open set $V_i
  \in \cV$. There is a corresponding $\tilde{V}_i\supset V_i$ whose connected
  components are the vertices of $\cMap(X,f)$. Let $\phi(v)$ be the connected
  component of $v$ in $\tilde{V}_i$.
If $e=(u,v)$ is an edge of $\cdbMap(X,f)$, then let $\phi(e) = \left(\phi(u), \phi(v)\right)$.
\end{definition}
\begin{proposition}
	The coarse-node identification map $\phi$ is well-defined and surjective. 
\end{proposition}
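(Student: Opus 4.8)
The plan is to regard $\phi$ as a map of graphs and establish, in order: (i) that $\phi(v)$ is unambiguously defined for every vertex $v$; (ii) that $\phi$ sends edges to edges, so that it is a genuine graph morphism; and (iii) that the resulting morphism is surjective. Throughout I would use that $X$ is a manifold, hence locally connected, so the path components of any open subset of $X$ are open, and that $\cV$ and $\tilde{\cV}$ are both covers of $X$ with the slices $V_i \subset \tilde{V}_i$ matched up index by index.

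For (i), I would recall that a vertex $v$ of $\cdbMap(X,f)$ is by definition a path component of one of the sets $V_i$, with the slice index $i$ recorded as part of the vertex. Since $V_i \subset \tilde{V}_i$ and the path components of $\tilde{V}_i$ partition it, the nonempty connected set $v$ lies in exactly one of them; this component is $\phi(v)$, and no choice was made. For (ii), I would take an edge $e = (u,v)$ of $\cdbMap(X,f)$ with $u$ a path component of $V_i$ and $v$ of $V_{i+1}$; such an edge is present precisely because $u \cap v \neq \emptyset$. From $u \subset \phi(u)$ and $v \subset \phi(v)$ we get $\emptyset \neq u\cap v \subset \phi(u)\cap\phi(v)$, so there is an edge of $\cMap(X,f)$ joining $\phi(u)$ and $\phi(v)$, and this is $\phi(e)$. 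Since $\phi(u)$ lives in slice $i$ and $\phi(v)$ in slice $i+1$ they are distinct vertices, so $\phi$ does not create self-loops and is indeed a morphism of graphs.

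The substance is in (iii). For surjectivity on vertices I would reduce to the claim that every path component $C$ of $\tilde{V}_i$ satisfies $C \cap V_i \neq \emptyset$. Granting it, pick any path component $u$ of $V_i$ meeting $C$; since $u$ is connected and $C$ is a full component of $\tilde{V}_i \supset V_i$, we get $u \subset C$, hence $\phi(u) = C$. To prove the claim I would argue by contradiction: if $C \cap V_i = \emptyset$, then because $\cV$ covers $X$, the set $C$ is covered by $\bigcup_{j\neq i} V_j \subset \bigcup_{j\neq i}\tilde{V}_j$, and I would then invoke regularity of $\tilde{\cV}$ — that $\tilde{V}_i$ cannot be dropped, that no three of the $\tilde{V}_j$ overlap simultaneously, and that the slices of the two covers correspond — to exhibit a point of $C$ reached by no $\tilde{V}_j$ with $j \neq i$, hence (as $\cV$ still covers $X$) a point of $C \cap V_i$. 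For surjectivity on edges I would run the parallel argument: an edge of $\cMap(X,f)$ joins a component $A$ of $\tilde{V}_i$ to a component $B$ of $\tilde{V}_{i+1}$ with $A\cap B\neq\emptyset$; using the no-triple-overlap condition I would locate $p \in A\cap B$ forced to lie in $V_i\cap V_{i+1}$, and then the components $u \ni p$ of $V_i$ and $v\ni p$ of $V_{i+1}$ form an edge of $\cdbMap(X,f)$ with $\phi(u) = A$ and $\phi(v) = B$.

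The main obstacle is precisely the claim ``$C \cap V_i \neq \emptyset$ for every component $C$ of $\tilde{V}_i$'' and its edge analogue: for wholly arbitrary regular covers this is delicate, since in principle $\tilde{V}_i$ could pick up an entire path component lying over a region of $L$ that $V_i$ never enters yet the remaining $V_j$ happen to cover. In carrying out the proof I would therefore lean on the structure of the covers that actually occur here — where $\tilde{\cV}$ is the maximally-coarse cover $\cV^c$ (or $\cV$ is the maximally-fine cover $\cV^f$), so that $\tilde{V}_i$ differs from $V_i$ only by a fattening within the same $f$-slice. Under that hypothesis $f(\tilde{V}_i)$ has essentially the same extent as $f(V_i)$, and every component of $\tilde{V}_i$ connects back into $V_i$; making this precise, and checking it really does yield $C\cap V_i\neq\emptyset$ together with the edge version, is the step I expect to need the most care, the remainder being bookkeeping.
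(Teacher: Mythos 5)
Your treatment of well-definedness matches the paper's: an edge $(u,v)$ of $\cdbMap(X,f)$ supplies a point $x\in u\cap v\subset \phi(u)\cap\phi(v)$, so $\phi$ sends edges to edges. You have also correctly isolated where the real content lies — surjectivity on vertices reduces to the claim that every component $C$ of $\tilde{V}_i$ meets $V_i$, and surjectivity on edges to producing a point of $V_i\cap V_{i+1}$ in the right component. But you do not actually prove either claim: you explicitly defer them (``making this precise \dots is the step I expect to need the most care''), so as written the proof is incomplete at exactly its crux.

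Moreover, the route you sketch for the edge case would fail. You propose to ``locate $p\in A\cap B$ forced to lie in $V_i\cap V_{i+1}$'' using the no-triple-overlap condition, i.e.\ to push a given point of $\tilde{V}_i\cap\tilde{V}_{i+1}$ down into $V_i\cap V_{i+1}$. No such forcing is available: a witness $y$ for an edge of $\cMap(X,f)$ may well lie in $V_i$ only (or $V_{i+1}$ only), and no overlap condition on the coarse cover changes that. The paper's argument is instead a connectedness argument: let $C$ be the component of $\tilde{V}_i\cup\tilde{V}_{i+1}$ containing $w_1\cup w_2$, and set $A=V_i\cap C$, $B=V_{i+1}\cap C$. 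Since $\cV$ covers $X$ and only consecutive sets meet, $A\cup B=C$, and $A$, $B$ are relatively open in $C$; if $A\cap B=\emptyset$ they would disconnect the connected set $C$, so $A\cap B= V_i\cap V_{i+1}\cap C\neq\emptyset$, and any point there yields the required preimage edge. Some version of this disconnection argument (rather than pointwise forcing) is the missing idea; an analogous argument, or the inclusion $f^{-1}(U_i)\subset V_i$ from sufficient width, is what you need to settle $C\cap V_i\neq\emptyset$ in the vertex case as well.
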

\begin{proof}

	First, for well-definedness, let $e=(v_1,v_2)$ be an edge of $\mathrm{M}_K$. Then there exists a point $x\in v_1\cap v_2 \subset V_{i}\cap V_{i+1}$. As $v_1\subset \phi(v_1)$ and $v_2\subset \phi(v_2)$, we have $x\in \phi(v_1)\cap \phi(v_2)$ and thus the edge $\phi(e)$ exists in $\mathrm{M}$, hence $\phi$ is well defined. Next we prove that $\phi$ is surjective.

	Suppose $a\in \RR$ is a critical value of $f$. Then, because $V_i \subset \tilde{V}_i$, the number of connected components of $\tilde{V}_i$ is less than or equal to the number of connected components of $V_i$. Therefore, the number of vertices of the graph $\mathrm{M}$ associated to $a$ is less than or equal to the number of vertices of the graph $\mathrm{M}_K$ associated to $a$. Therefore, at each such value, the map $\phi:V_a(\mathrm{M}_K) \to V_a(\mathrm{M})$ by taking a connected component in $V_i$ to its connected component in $\tilde{V}_i$ is surjective.
	
	Suppose $e=(w_1,w_2)$ is an edge of $\mathrm{M}$. Then there exists a point $y$ in $w_1\cap w_2 \subset \tilde{V}_i\cap\tilde{V}_{i+1}$. It is possible that $y\not \in V_i \cap V_{i+1}$, but since $\cV$ is a cover, either $y\in V_i$ or $y\in V_{i+1}$. WLOG, suppose $y\in V_i$. Then let $v_1$ be the connected component of $y$ in $V_i$, and let $v_2$ be $v_1 \cap V_{i+1}$. It remains to show that $v_2 \neq \emptyset$. 

	Since $y\in w_1\cap w_2$, we have that $w_1\cap w_2 \neq \emptyset$ and thus $w_1$ and $w_2$ belong to the same connected component $C$ inside $\tilde{V}_i\cup \tilde{V}_{i+1}$. Let $A = V_i \cap C$ and $B = V_{i+1} \cap C$. Then $A\cup B = C$,  $A\cap w_1 = v_1$ and $B \cap w_2 = v_2$. Furthermore $A\cap B = V_i\cap V_{i+1} \cap C = v_1 \cap V_{i+1} = v_2$. Therefore if $v_2 = \emptyset$, then $A$ and $B$ would disconnect $C$, a contradiction.
\end{proof}
The coarse-node identification map between the maximally coarse Mapper and
density-based Mapper can be interpreted as a map of modules from $\KCZZ(f,\cV)
\to \mathrm{CZZ}(f,U^c)$, as we now explain. The slice $f^{-1}(I_i)$ is homotopy
equivalent to $X_{l(I_i)}^{r(I_i)}$ for every $I_i\in \cU^c$, and thus
$H_0(X_{l(I_i)}^{r(I_i)})$ is generated by the connected components of
$f^{-1}(I_i)$, which are vertices $\{v^i_j\}$ of the coarse Mapper. On the other
hand, $H_0(V_i)$ is generated by the connected components of $V_i$, which are
vertices of the density-based Mapper, which surject via $\phi$ onto
$H_0(X_{l(I_i)}^{r(I_i)})$.

Thus, we have the following:
\begin{equation*}
  \label{e:subset}
  \Dg(\cMMap, \bar{f})^c \subset \KCZZ_{cb,0}(f,\cV) \subset \Dg(\cMMap,
  \bar{f})^f.
\end{equation*}
Combining Equation \ref{e:subset}, Theorem \ref{thm:mmapperconv} and Proposition
\ref{thm:dg-kczz}, we can now obtain the convergence result for continuous
density based Mapper.

\begin{theorem}
  Let $X$ be a topological space, and $f:X\to \RR$ a Morse-type function. Let
  $R_f(X)$ be the Reeb graph, and let $\cV$ be a cover of $X$ with maximally
  coarse resolution $r$. Then
  \begin{equation*}
  \Dg(R_f(X), f) - \{(x,y) ~|~ |y-x| \leq r \} \subset \Dg(\cdbMMap(X,f), \bar{f}) \subset
  \Dg(R_f(X), f).
\end{equation*}
  Moreover, the persistence diagrams become equal if $r$ is smaller than the
  smallest difference between distinct critical values of $f$.
\end{theorem}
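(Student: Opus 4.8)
The plan is to assemble the final theorem purely from the pieces already established in the excerpt, treating it as a bookkeeping result rather than a fresh argument. First I would invoke Proposition~\ref{thm:dg-kczz}, which gives a bijection between $\Dg(\cdbMMap(X,f),\bar f)$ and $\KCZZ_{cb,0}(f,\cV)$, so that everything can be rephrased in terms of the kerneled cover zigzag barcode. Next I would use the inclusion chain recorded in Equation~\ref{e:subset}, namely $\Dg(\cMMap,\bar f)^c \subset \KCZZ_{cb,0}(f,\cV) \subset \Dg(\cMMap,\bar f)^f$, which places the kerneled barcode between the persistence diagrams of the maximally-coarse and maximally-fine multinerve Mappers. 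The point of this sandwiching is that $\cMMap^c$ and $\cMMap^f$ are \emph{standard} multinerve Mappers (for the gomics $\cU^c$ and $\cU^f$), so Theorem~\ref{thm:mmapperconv} applies to each of them directly.

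The core step is then to feed Theorem~\ref{thm:mmapperconv} into both ends of the sandwich. On the upper side, $\Dg(\cMMap^f(X,f)) \subset \Dg(R_f(X),f)$ gives immediately that $\Dg(\cdbMMap(X,f),\bar f) \subset \Dg(R_f(X),f)$. On the lower side, applying the theorem to the maximally-coarse cover $\cU^c$, whose resolution is the quantity $r$ named in the hypothesis, yields $\Dg(R_f(X),f) - \{(x,y) : |y-x|\le r\} \subset \Dg(\cMMap^c(X,f)) \subset \Dg(\cdbMMap(X,f),\bar f)$. Combining the two inclusions gives exactly the displayed double inclusion. Here I should be careful that the ``resolution $r$'' in the statement is indeed the resolution of $\cV^c$ (equivalently $\sup_{U\in\cU^c}\ell(U)$), and that the definition of ``maximally coarse resolution'' in the theorem matches $r_c$ from the preceding discussion; this is the one place where a definition-chasing slip could occur, since one must confirm that $\ell(U^c_i)$ coincides with the per-set diameter $\sup_{x,y\in V_i}|f(x)-f(y)|$ used in the resolution definition — which it does, by construction of $U^c_i$ as the open interval spanned by $f(V_i)$.

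For the ``moreover'' clause, suppose $r$ is strictly smaller than the minimum gap between distinct critical values of $f$. By the Proposition stating that endpoints of persistence intervals of $\mathrm{EP}(f)$ occur only at critical values, every point $(x,y)\in\Dg(R_f(X),f)$ off the diagonal has $|y-x|$ equal to a difference of two distinct critical values (using here that the bar is an ordinary/relative/extended interval whose endpoints are critical values, so its length is at least the minimal gap), hence $|y-x| \ge$ the minimal gap $> r$. Therefore the subtracted set $\{(x,y):|y-x|\le r\}$ meets $\Dg(R_f(X),f)$ only in diagonal points, which carry no information, so $\Dg(R_f(X),f) - \{(x,y):|y-x|\le r\}$ equals $\Dg(R_f(X),f)$ and the two inclusions collapse to an equality. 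I expect the main obstacle to be entirely notational — reconciling the three flavours of ``resolution'' ($r$, $r_f$, $r_c$) and making sure Theorem~\ref{thm:mmapperconv} is quoted against the correct gomic at each end of the sandwich — rather than anything requiring new topology; as a sanity check one should verify that $\cU^c$ and $\cU^f$ really are gomics, which is precisely what the ``regular cover'' definition was engineered to guarantee.
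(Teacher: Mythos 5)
Your proposal is correct and follows essentially the same route as the paper, which obtains this theorem precisely by combining Proposition~\ref{thm:dg-kczz}, the sandwich in Equation~\ref{e:subset}, and Theorem~\ref{thm:mmapperconv} applied to the maximally-coarse and maximally-fine gomics. Your explicit treatment of the ``moreover'' clause via the proposition that persistence-interval endpoints occur only at critical values is a correct filling-in of a step the paper leaves implicit.
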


\begin{proposition}
	\label{lemma2}
	Let $\cV$ be a kerneled cover of $(X,f)$ associated to $(\cU,K)$ and let $\tilde{\cU}$ be the associated maximally-coarse cover. Then 
	\begin{equation*}
    d_\Delta\left(\Dg(R_f(X)), \Dg(\cdbMap, \bar{f})\right) \leq r,
	\end{equation*}
	where $r$ is the resolution of $\tilde{\cU}$.
\end{proposition}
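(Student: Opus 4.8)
The plan is to deduce this bottleneck bound directly from the subset sandwich established in the previous theorem, using the general principle (recorded just after the definition of bottleneck distance) that a matching which leaves a point unmatched incurs transport cost at most $|b-d|$ for that point $(b,d)$. First I would invoke the previous theorem, which states
\begin{equation*}
  \Dg(R_f(X), f) - \{(x,y) ~|~ |y-x| \leq r\} \subset \Dg(\cdbMMap(X,f), \bar{f}) \subset \Dg(R_f(X), f),
\end{equation*}
with $r$ the maximally-coarse resolution of $\cV$, i.e. the resolution of $\tilde{\cU}$. The two inclusions say: every point of $\Dg(\cdbMMap, \bar f)$ already appears in $\Dg(R_f(X))$, and conversely every point of $\Dg(R_f(X))$ that is missing from $\Dg(\cdbMMap, \bar f)$ must satisfy $|y-x| \leq r$.

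Next I would construct the partial matching $\Gamma$ that pairs each point of $\Dg(\cdbMMap, \bar f)$ with the identical point of $\Dg(R_f(X))$ (these share the same type, so the type constraint is met), and leaves every remaining point of $\Dg(R_f(X))$ unmatched. The cost of $\Gamma$ is the maximum transport cost over all points. Matched points contribute $0$ since $\|p - p\|_\infty = 0$. An unmatched point $p = (b,d) \in \Dg(R_f(X)) \setminus \Dg(\cdbMMap, \bar f)$ contributes $\delta(p) = \inf_{q \in \Delta}\|p - q\|_\infty \leq \frac{1}{2}|b - d|$ (the $\ell^\infty$ distance to the diagonal is exactly half the $\ell^1$-gap $|b-d|$), and by the first inclusion such a point satisfies $|b - d| \leq r$. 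Hence every unmatched point contributes at most $r/2 \leq r$. Therefore $\mathrm{cost}(\Gamma) \leq r$, and taking the infimum over all matchings gives $d_\Delta(\Dg(R_f(X)), \Dg(\cdbMMap, \bar f)) \leq r$ — in fact the sharper bound $r/2$.

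Finally, to get the statement as written — which is about $\cdbMap$, not $\cdbMMap$ — I would apply the remark following Theorem \ref{thm:mmapperconv}, namely that the projection $\pi: \cdbMMap(X,f) \to \cdbMap(X,f)$ induces a surjection in degree-zero homology for 1-dimensional Mappers, yielding $\Dg(\cdbMMap, \bar f)^{(c)} \subset \Dg(\cdbMap, \bar f) \subset \Dg(\cdbMMap, \bar f)$, and hence the same sandwich $\Dg(R_f(X)) - \{|y-x| \leq r\} \subset \Dg(\cdbMap, \bar f) \subset \Dg(R_f(X))$ holds with $\cdbMap$ in place of $\cdbMMap$. The identical matching argument then produces the bound $d_\Delta(\Dg(R_f(X)), \Dg(\cdbMap, \bar f)) \leq r$. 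The only mild subtlety I anticipate is bookkeeping the point \emph{types} (ordinary / relative / extended) so that the diagonal matching is legitimate — but since we are matching each point to its literal self, the types trivially agree, so this is not a real obstacle; the work is entirely in assembling the already-proven inclusions and the elementary diagonal-distance estimate.
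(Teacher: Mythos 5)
Your proposal is correct and follows essentially the same route as the paper, which simply cites the inclusion sandwich as giving the bound as an immediate corollary; you have just filled in the explicit partial matching, the diagonal-distance estimate, and the multinerve-to-nerve step via the projection remark. Your observation that the argument actually yields the sharper bound $r/2$ is a valid refinement, but otherwise there is no substantive difference in approach.
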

\begin{proof}	
  This is a corollary of the inclusion result above. 
\end{proof}
\subsection{Convergence of Discrete Density-Based Mapper}
\label{ss:discrete-convergence}
Now that we know density-based Mapper converges to the Reeb graph when we have
access to a bona fide continous manifold $X$, it remains to understand what
happens when we work with samples $\XX_n \leftarrow X$.

First, we generalize a result about the Mapper of the the Rips complex to kerneled covers from \cite{dey_reeb_2013}.
\begin{definition}
    Let $G = (\XX_n, E)$ be a graph with vertices $\XX_n$. For $e=(x,x') \in E$, let $I(e)$ be the interval 
    \begin{equation*}
        \left( \mathrm{min}(f(x),f(x')), ~ \mathrm{max}(f(x), f(x')) \right).
    \end{equation*}
    Then $e$ is said to be $\emph{intersection-crossing}$ for the gomic $\cU$ if there is a pair of consecutive intervals $U_i, U_j \in \cU$ such that $U_i\cap U_j \subset I(e)$.
\end{definition}

\begin{proposition}
\label{lemma1}
	Let $\mathrm{Rips}_\delta^1(\XX_n)$ denote the 1-skeleton of $\Rip(\XX_n)$. If it has no intersection-crossing edges, then $\dbMap(\XX_n,f)$ and $\MapRip$ are isomorphic as combinatorial graphs.
\end{proposition}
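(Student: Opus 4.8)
The plan is to mirror the proof of the corresponding statement for the ordinary pullback cover in \cite{dey_reeb_2013}, isolating the single place where the kerneled cover behaves differently. Put $\hat V_i := \{p\in\Rip(\XX_n) : K(p,t_i)>\epsilon\}$, so that the $i$-th discrete slice is $V_i=\hat V_i\cap\XX_n$; the vertices of $\dbMap(\XX_n,f)$ associated to $U_i$ are the connected components of the full subcomplex $\Rip(V_i)\subseteq\Rip(\XX_n)$, and those of $\MapRip$ associated to $U_i$ are the path components of $\hat V_i$. Since both graphs have edges only between consecutive levels, it suffices to build, for each $i$, a bijection between these two sets of vertices that is compatible with the edge relation between levels $i$ and $i+1$.

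Two structural facts drive everything. \textbf{(a)} If every vertex of a simplex $\sigma\in\Rip(\XX_n)$ lies in $V_i$, then $\sigma\subseteq\hat V_i$: as $f$ is affine on $\sigma$, each $q\in\sigma$ has $|f(q)-t_i|\le\max_y|f(y)-t_i|$ over the vertices $y$ of $\sigma$, and $K(\cdot,t_i)$ is non-increasing in $|f(\cdot)-t_i|$, so $K(q,t_i)\ge\min_y K(y,t_i)>\epsilon$. \textbf{(b)} Under the hypothesis, if $\sigma\cap\hat V_i\ne\emptyset$ then $\sigma$ has a vertex in $V_i$, and if $\sigma\cap\hat V_i\cap\hat V_{i+1}\ne\emptyset$ then $\sigma$ has a vertex in $V_i\cap V_{i+1}$. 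Here one writes $\hat V_i=f^{-1}(t_i-c_i,\,t_i+c_i)$ (valid because $K(\cdot,t_i)$ is a function of $|f(\cdot)-t_i|$), and the sufficient-width condition for $\{K_i\}$ relative to $\cU$ forces $c_i\ge\ell(U_i)/2$, so $(t_i-c_i,t_i+c_i)\supseteq U_i$ and $(t_i-c_i,t_i+c_i)\cap(t_{i+1}-c_{i+1},t_{i+1}+c_{i+1})\supseteq U_i\cap U_{i+1}$. If no vertex of $\sigma$ were in $V_i$ (resp.\ in $V_i\cap V_{i+1}$) then every vertex of $\sigma$ would have $f$-value outside that interval while the witnessing point of $\sigma$ has $f$-value inside it; being a convex combination, the vertices $y_{\min},y_{\max}$ of $\sigma$ realizing the extreme $f$-values straddle the interval, so the edge $(y_{\min},y_{\max})$ — a face of $\sigma$, hence in $\Rip(\XX_n)$ — satisfies $I\bigl((y_{\min},y_{\max})\bigr)\supseteq U_i\cap U_{i\pm1}$, contradicting the absence of intersection-crossing edges.

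Given (a) and (b), the inclusion $\Rip(V_i)\hookrightarrow\hat V_i$ is a bijection on path components: it is onto because any $p\in\hat V_i$ in the interior of a simplex $\sigma$ is joined, within the convex set $\sigma\cap\hat V_i$, to a vertex of $\sigma$ in $V_i$ supplied by (b); it is injective by walking a path in $\hat V_i$ simplex by simplex and using (b) at each shared face to stay inside the subcomplex. The same argument shows the path components of $\hat V_i\cap\hat V_{i+1}$ biject with the connected components of $\Rip(V_i\cap V_{i+1})$. For edges, recall that in these (non-multinerve) Mapper graphs a level-$i$ vertex and a level-$(i+1)$ vertex are joined exactly when the two slice components intersect; under the bijections this says some component of $\Rip(V_i\cap V_{i+1})$ meets both, i.e.\ some $x\in V_i\cap V_{i+1}\cap\XX_n$ belongs to the corresponding clusters — which is precisely the rule defining the edges of $\dbMap(\XX_n,f)$. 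Hence the level-wise vertex bijections assemble into a graph isomorphism.

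The hard part is step (b): the overshoot of $\hat V_i$ beyond $f^{-1}(U_i)$ is bounded only from below by $c_i\ge\ell(U_i)/2$, so some care is needed to guarantee that the straddling edge genuinely contains an \emph{entire} overlap $U_j\cap U_{j+1}$ of the gomic (matching open and closed endpoints correctly), and one also needs non-consecutive slices $\hat V_i,\hat V_j$ to be disjoint so that the two graphs share the same index structure — this is where the regularity of the kerneled cover $\cV$ is used. Everything else is the routine "nerve of a Rips complex equals the single-linkage nerve" bookkeeping of \cite{dey_reeb_2013}.
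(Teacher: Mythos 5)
Your overall strategy is the same as the paper's: identify the vertices of both graphs by matching connected components of the continuous slices of the geometric realization with connected components of the subcomplexes spanned by the discrete slices, and then use the absence of intersection-crossing edges to show that any 1-cell witnessing an edge of $\MapRip$ must have an endpoint in $V_i\cap V_{i+1}$ witnessing the corresponding edge of $\Mapdelta$. Your step (a) and the simplex-by-simplex walking argument are in fact more careful than the paper's proof, which simply asserts that the components of $\Rip(\XX_n)\cap V_i$ agree with those of its 1-skeleton.

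The one step that does not survive scrutiny is your justification of (b): you write $\hat{V}_i=f^{-1}(t_i-c_i,\,t_i+c_i)$ on the grounds that $K(\cdot,t_i)$ is a function of $|f(\cdot)-t_i|$. For the kernels this paper is actually about, that is false: the kernel width is modulated by the lens-space density through $c(\beta(x))$, so two points with the same $f$-value can lie on opposite sides of the threshold $\epsilon$, and $\hat{V}_i$ is precisely \emph{not} the preimage of an interval --- this is the whole point of the construction (compare Figures 1 and 2). Consequently your straddling argument, which contrasts vertices whose $f$-values lie outside a fixed interval with an interior point whose $f$-value lies inside it, does not apply as written. The paper instead argues directly on the witnessing 1-cell $\tilde{e}=(u,v)$: if $f(u)$ (or $f(v)$) lies in $U_i\cap U_{i+1}$ then sufficient width of the kernels --- not any interval description of $\hat{V}_i$ --- puts that endpoint in $V_i\cap V_{i+1}$, and otherwise $\tilde{e}$ is intersection-crossing. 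You should restate (b) in that form; once you do, the rest of your argument (the component bijections and the edge matching) goes through and coincides with the paper's.
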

\begin{proof}
	Let $L\subset \RR$ denote the image of $f(X)$, and let $\cU$ be the open cover of $L$ used in the density-based Mapper algorithm. For each $U_i \in \cU$, let
	$K_i(x) = K(x,t_i)$ for the midpoint $t_i\in \cU$
	
	A vertex of $\MapRip$ is a connected component of $\mathrm{Rips}_\delta(\XX_n)\cap V_i$, for some $V_i = K_i^{-1}(\epsilon,\infty)$. Denote these connected components as $c_{i}^j$, $j=1,...,{J_i}$. The connected components of $\mathrm{Rips}_\delta(\XX_n)\cap V_i$ are the same as the connected components of its 1-skeleton, which are the vertices of $\Mapdelta$ by definition. Therefore the set of vertices $\{c_{i}^j\}$ of these graphs are the same.
	
	It remains to construct an isomorphism on the edges. An edge $(c_{i_1}^j, c_{i_2}^k)$ of $\Mapdelta$ corresponds to a point $x \in \XX_n$ which is in the intersection of two pullbacks; $x\in V_{i_1}\cap V_{i_2}$, and which is in the connected component $c_{i_1}^j \subset V_{i_1}$ and $c_{i_2}^k \subset V_{i_2}$.  On the other hand, an edge $(c_{i_1}^j, c_{i_2}^k)$ of $\MapRip$ is a point $x\in X$ with the same propositionerty. This shows the edges of $\Mapdelta$ are a subset of the edges of $\MapRip$, but we still need to check the other inclusion.
	
	By regularity of the cover $\cU$, the intersection $U_{i_1}\cap U_{i_2}$ is an interval $I$. The existence of an edge in $\MapRip$ implies that the components $c_{i_1}^{j}$ and $c_{i_2}^k$ are connected in $V_{i_1}\cup V_{i_2}$; call this connected component $C$. Since $C$ is connected, its 1-skeleton is connected, so there is a 1-cell $\tilde{e}$ connecting $c_{i_1}^j$ and $c_{i_2}^k$. Let its endpoints be denoted $u$ and $v$, which are both points in $\XX_n$. If $f(u)\in I$, then $u$ will define an edge in $\Mapdelta$, and same with $v$. Otherwise, the edge $\tilde{e}$ is intersection-crossing. Thus if $\mathrm{Rips}_\delta^1(\XX_n)$ has no intersection-crossing edges, then the set of edges of $\Mapdelta$ and $\MapRip$ are equal.
\end{proof}

	Using these lemmas, one can recover the convergence result:

\begin{definition}[Reach and Convexity Radius]
    Let $X\subset\RR^d$ be a topological space. The \emph{medial axis} of $X$ is the set of points in $\RR^d$ with at least two nearest neighbours in $X$:
    \begin{equation*}
        \mathrm{med}(X) := \left\{y \in \RR^d ~|~ 
        \#\{x\in X ~|~ \|y-x\| = \|y,X\| \} \geq 2\right\},
    \end{equation*}
    where $\|y,X\| = \inf\{\|y-x\| ~|~ x\in X\}$. The \emph{reach} of $X$, $\reach$ is the distance $\inf\{\|x-m\| ~|~ x\in X, m\in \mathrm{med}(X)\}.$ \vspace{1em}

    The \emph{convexity radius} of $X$ is the supremum of $\rho\in\RR$ for which every ball in $X$ of radius less than $\rho$ is convex.
\end{definition}
\begin{definition}[Modulus of Continuity]
    Let $f:X\to\RR$ be a Morse-type function. The modulus of continuty $\omega_f$ of $f$ is the function $\RR_{\geq 0}\to\RR_{\geq 0}$,
    \begin{equation*}
        \omega_f(\delta) = \sup\{
            |f(x) - f(x')| ~|~ x,x'\in X,~ \|x-x'\|\leq \delta
        \}
    \end{equation*}
\end{definition}
\begin{theorem}
    \label{thm:main-bound}
	Suppose $X$ is a compact continuous manifold with positive reach $\reach$ and convexity radius $\rho$. Let $\XX_n\leftarrow X$ be $n$ points sampled from $X$. Let $f$ be a Morse-type function on $X$ with modulus of continuity $\omega$. Then if the three following hypotheses hold:
	\begin{enumerate}
		\item $\delta \leq \frac{1}{4} \min\{\reach,\rho\}$ and $\omega(\delta) \leq \frac{1}{2}e_{\mathrm{min}}$,
		\item $\max\{ |f(x_i) - f(x_j)|  ~|~ x_i,x_j \in \XX_n, \text{ such that } \|x_i-x_j\| \leq \delta \} \leq gr$,
		\item $4d_H(X,\XX_n) \leq \delta$, 
	\end{enumerate}
	Then the bottleneck distance between the density-based Mapper with single-linkage clustering and the Reeb graph satisfies
	\begin{equation*}
		d_\Delta(R_f(X), \Mapdelta) \leq r + 2\omega(\delta).
	\end{equation*}
	\vspace{0.1em}
\end{theorem}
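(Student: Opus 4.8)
The plan is to reproduce the architecture of the convergence proof for ordinary Mapper, \cite[Theorem~2.7]{carriere_statistical_2018}: pass from the sampled object to a genuine topological space via $\Rip(\XX_n)$, and assemble the estimate from the triangle inequality for the bottleneck pseudometric,
\begin{equation*}
  d_\Delta\!\left(R_f(X),\Mapdelta\right)
  \le
  d_\Delta\!\left(\Dg(R_f(X),f),\Dg(R_f(\Rip(\XX_n)),f)\right)
  + d_\Delta\!\left(\Dg(R_f(\Rip(\XX_n)),f),\Dg(\Mapdelta,\bar f)\right).
\end{equation*}
The first thing I would do is make the second term purely continuous. Hypothesis~2 is precisely the condition that, given the overlap structure of the gomic $\cU$, rules out intersection-crossing edges of $\mathrm{Rips}^1_\delta(\XX_n)$, so Proposition~\ref{lemma1} identifies $\Mapdelta$ with $\MapRip=\cdbMap(\Rip(\XX_n),f)$ as combinatorial graphs, compatibly with the induced functions up to an $\omega(\delta)$-scale discrepancy in vertex values. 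Thus the second term is bounded by applying the continuous convergence result, Proposition~\ref{lemma2} (equivalently the theorem preceding it), to the compact space $Y:=\Rip(\XX_n)$ with the kerneled cover it inherits from $(\cU,K)$: this controls $d_\Delta\!\left(\Dg(R_f(Y),f),\Dg(\Mapdelta,\bar f)\right)$ by the maximally-coarse resolution of the inherited cover, and one then checks that Hypothesis~2 together with the gain $g$ keep that cover regular, while the $f$-slices of $Y$ exceed those of $X$ by only a term of order $\omega(\delta)$, so this resolution is at most $r$ up to an $\omega(\delta)$ correction.

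For the first term I would invoke the standard topological-inference package in the positive-reach regime. Hypotheses~1 and 3 — $\delta\le\tfrac14\min\{\reach,\rho\}$ and $4\,d_H(X,\XX_n)\le\delta$ — put us where the $\delta$-offset of $\XX_n$ deformation retracts onto $X$ and is homotopy equivalent to $\Rip(\XX_n)$, and the condition $\omega(\delta)\le\tfrac12 e_{\mathrm{min}}$ guarantees the replacement cannot move a critical value of $f$ far enough to collide with a neighbouring one. Tracking $f$ and its critical structure through these equivalences shows that the extended filtrations of $(X,f)$ and of $(\Rip(\XX_n),f)$ are interleaved within a multiple of $\omega(\delta)$, hence their Reeb-graph persistence diagrams are that close in bottleneck distance, in the spirit of the stability results of \cite{dey_reeb_2013}. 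Adding the two terms, and accounting for the several ways the modulus of continuity enters (the slice-widening on $Y$, the interleaving of filtrations, and the $d_H$-to-$\omega$ conversion), produces the stated bound $d_\Delta(R_f(X),\Mapdelta)\le r+2\omega(\delta)$.

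The main obstacle is the first term: making the comparison between $R_f(X)$ and $R_f(\Rip(\XX_n))$ fully rigorous requires the quantitative interplay of reach, convexity radius, Hausdorff distance and modulus of continuity — in particular the nerve/deformation-retraction arguments showing the $\delta$-offset is homotopy equivalent both to $X$ and to $\Rip(\XX_n)$, and a careful verification that $f$, together with its critical points and values, is preserved up to $\omega(\delta)$ so that no persistence feature is spuriously created or destroyed. Everything else is bookkeeping: checking the no-intersection-crossing hypothesis of Proposition~\ref{lemma1} from Hypothesis~2, checking that the cover inherited by $\Rip(\XX_n)$ is still regular so that Proposition~\ref{lemma2} applies, and tallying how the $\omega(\delta)$ contributions from the two comparison steps accumulate to the constant $2\omega(\delta)$.
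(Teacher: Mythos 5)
Your proposal follows essentially the same route as the paper's proof: pass to the Rips complex, use Proposition \ref{lemma1} (justified by hypothesis 2, ruling out intersection-crossing edges) to identify the discrete density-based Mapper with the continuous density-based Mapper of $\Rip(\XX_n)$, apply the triangle inequality through the Reeb graph of the Rips complex, bound the Reeb-to-Reeb term by $2\omega(\delta)$ via hypotheses 1 and 3 and the standard reach/Hausdorff-distance inference results, and bound the remaining term by the continuous convergence result (Proposition \ref{lemma2}), exactly as in \cite[Theorem 2.7]{carriere_statistical_2018} with their Theorems A2 and A4 replaced by these propositions. The only difference is the order in which the identification and the triangle inequality are applied, which is immaterial.
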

\begin{proof}
	The proof proceeds exactly as the proof of Theorem 2.7 in \cite{carriere_statistical_2018}, replacing their Theorems A2 and A4 with our Propositions \ref{lemma1} and \ref{lemma2}. We provide a sketch here. As shorthand, let $\dbMap = \dbMap(\XX_n, f).$

    \begin{align*}
        d_\Delta(R_f(X), \dbMap) &= d_\Delta(\Dg(R_f(X), f),~\Dg(\dbMap, f))\\
        &= d_\Delta(\Dg(R_f(X), f), ~ \Dg(\cdbMap(\Rip(\XX_n), \bar{f})))\\
        (\text{triangle inequality}) \quad &\leq d_\Delta\left(\Dg(R_f(X), f),~\Dg(R_{\bar{f}}(\Rip(\XX_n), \bar{f}))\right) \\
        &+ d_\Delta\left(\Dg(R_{\bar{f}}(\Rip(\XX_n), \bar{f})),~ \Dg(\cdbMap(\Rip(\XX_n), \bar{f}),\bar{f})\right)\\
        &\leq 2\omega(\delta) + r/2
    \end{align*}
    The equality from first to second line is due to Proposition \ref{lemma1}, which holds due to hypothesis 2), and the final equality is due to hypotheses 1) and 3), and Proposition \ref{lemma2}.
\end{proof}	

Theorem \ref{thm:main-bound} tells us that density-based Mapper will perform at least as well (in bottleneck distance) as regular Mapper for the same gomic $\cU$. Furthermore, it allows one to apply the statistical analysis of Mapper by Carriere, Michel and Oudot \cite[\S 3-4]{carriere_statistical_2018} to density-based Mapper.

\section{Computational Experiments}
\label{sec:experiments}

In this section, we explain two computational experiments we conducted to validate density-based Mapper. The first experiment aims to verify that our proposed method for approximating lens-space density and computing a kerneled cover matches our theoretical expectations. Specifically, we want to recreate Figure \ref{fig:dbmapper-idea} computationally using synthetic data. The second experiment qualitatively assesses whether density-based Mapper addresses Mapper's difficulty capturing topological features in datasets with largely varying lens-space density. 

\subsection{Verifying the Approximate Lens-Space Density Computation}
\begin{figure}
    \centering
    \includegraphics[width=0.4\textwidth]{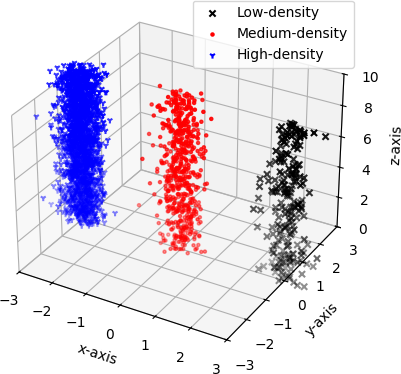}
    \caption{Synthetic data for lens-space density computation.}%
    \label{fig:data-pullback}%
\end{figure}
To verify that our construction of kerneled covers matches the theoretical intuition described in Section \ref{sec:theory}, we first constructed a dataset with three distinct components, each having a different lens-space density. According to our proposal for building a kerneled cover, we predict a cover comprising sets whose width increases as the lens-space density decreases.

Our test dataset is a subset of $\mathbb{R}^3$, with Morse-type function $f$ defined as projection to the $z$-axis. We generated samples $\XX_n := \{x_i,...,x_n\}$ by sampling from a mixture of three distributions, with the mixture coefficients controlling the relative lens-space density of the components. Each component distribution combined a circular Gaussian distribution for the $x$ and $y$ coordinates with a uniform random distribution over $[0,10]$ for the $z$-coordinate. In particular, this means the lens-space is $L=[0,10]$. The generated data is shown in Figure \ref{fig:data-pullback}, where the components are distinguished by colour. Python code used to generate this data and subsequent figures is provided in Appendix A.

We computed the approximate lens-space density and a lens-spaced kerneled cover, as described in Section \ref{sec:covers}. To visualize the result in 2 dimensions, we reduced the $(x,y)$-axes in the data to a single axis using principal component analysis. The result is shown in Figure \ref{fig:pullback}. 

\begin{figure}[h]
	\centering
	\includegraphics[width=\textwidth]{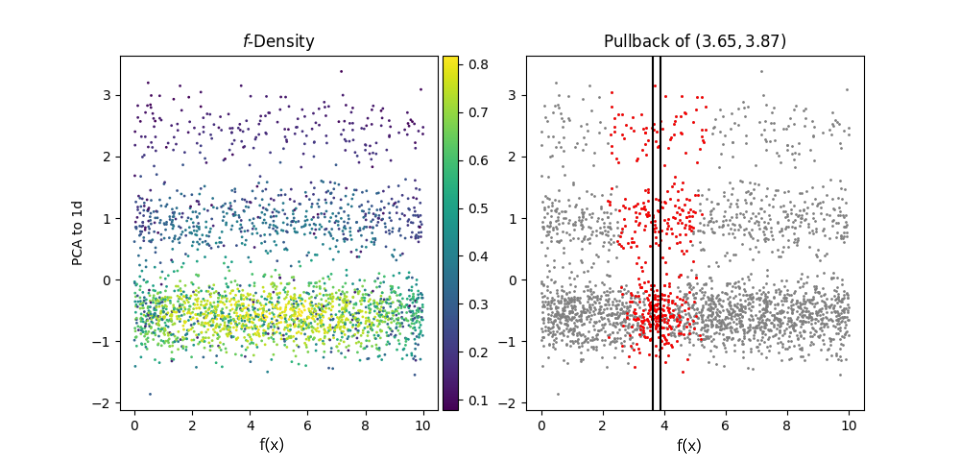}
	\caption{Left: Colour represents the lens-space density. \quad Right: Points in the kerneled pullback are red.}
	\label{fig:pullback}
\end{figure}

The left-side plot displays the computed lens-space density using colour, normalized to the interval $(0,1)$ as explained in Section \ref{sec:lens-space-density}. Three distinct regions – corresponding to high, medium, and low lens-space density – are visible, matching our intended design.

The right-side plot displays the open set $K^{-1}(0.1, \infty)$ from the kerneled cover associated to the lens-space interval $I=(3.65, 3.87)$. Points coloured red belong to this set, while points between the black lines are in $f^{-1}(I)$, the pullback open set used in standard Mapper. As predicted, the kerneled cover's open sets enlarge in regions of lower lens-space density and contract in denser regions. The similarity of Figure \ref{fig:dbmapper-idea} and Figure \ref{fig:pullback} provides evidence that our implementation and theoretical intuition align. 

\subsection{Comparison of Mapper and Density-Based Mapper on Variable-Density data}

The motivation behind density-based Mapper is to improve Mapper's performance on datasets with varying lens-space density. The goal is for density-based Mapper to detect small features in high-density regions of the dataset without fragmenting large features in sparse regions due to insufficient data. 

To evaluate density-based Mapper's effectiveness, we constructed a minimal example dataset with variable lens-space density that is challenging for Mapper. We proceeded to run Mapper and density-based Mapper on this dataset with a range of parameters, to assess the impact of our proposed changes on the results.

\begin{figure}
    \centering
    \includegraphics[width=0.75\textwidth]{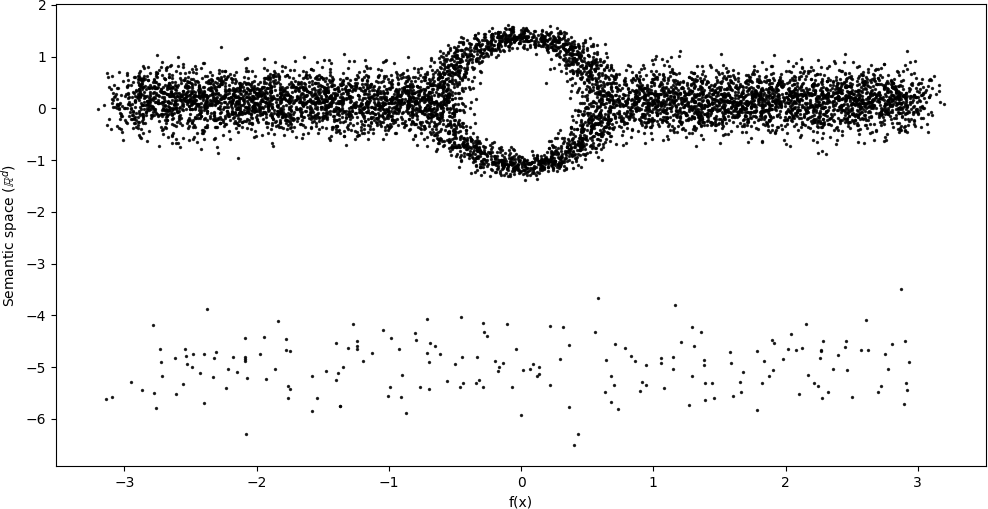}
    \caption{Synthetic data with two components with imbalanced density. The higher-density component has a genus-1 feature.}%
    \label{fig:data1}%
\end{figure}

Our test dataset needs to have a high density and low density region, and the high density region needs a smaller-scale topological feature to try to detect. The test dataset, displayed in Figure \ref{fig:data1}, comprises of a high-density component with a 1-dimensional topological feature (a loop), and a lower-density component. If the lens-space density was more uniform across the dataset, we would expect Mapper to output a correct graph across a wide range of parameters. Figure \ref{fig:balanced-genus1} displays a version of this dataset with uniform density and a graph produced by running Mapper on it. The persistent homology of the datasets in Figures \ref{fig:data1} and \ref{fig:balanced-genus1} are the same, but with different distribution of points. This is why we expect Mapper and density-based Mapper to produce graphs similar to the right-hand side of Figure \ref{fig:balanced-genus1} on both datasets. For this experiment, a Mapper graph is considered \emph{correct} if it meets two criteria:
\begin{enumerate}
    \item It consists of two connected components,
    \item One of these components contains a single genus-1 feature.
\end{enumerate}

\begin{figure}
    \centering
    \includegraphics[width=\linewidth]{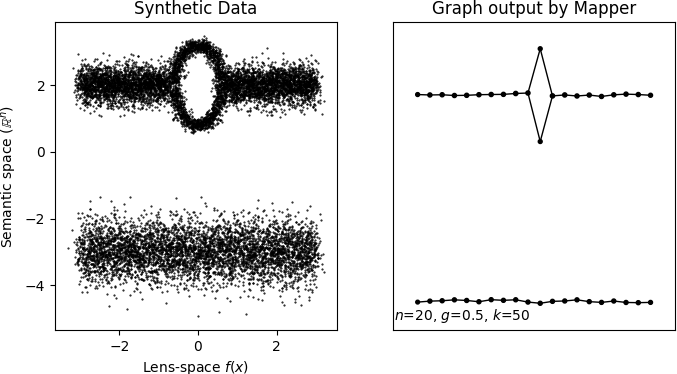}
    \caption{Synthetic data with two components of similar density, and the combinatorial graph produced by running Mapper on this data.}
    \label{fig:balanced-genus1}
\end{figure}

Selecting parameters to have standard mapper Mapper produce a correct graph for this dataset is difficult. If the resolution of the gomic $~\cU$ of the lens-space is too low, the resulting sets will contain very few points from the sparse region of the dataset, preventing Mapper from detecting its connected component. Conversely, if the resolution is too high, the algorithm fails to resolve the genus-1 feature in the dense component.

Density-based Mapper addresses this trade-off by using the relative densities of the components to adjust sets in the kerneled cover, widening sets in the low density region and narrowing them in the high density region. This should enable density-based Mapper to detect both the genus-1 feature and the sparse component across a broader parameter range than standard Mapper. 

To test this hypothesis, we ran Mapper and density-based Mapper on this dataset with a range of parameters. Specifically, we varied the parameters that affect the resolution of the cover: the number of open sets $N$, and their overlap $g$. Each trial used a Morse-spaced cover (Def. \ref{def:tscover}), and we used a square kernel for density-based Mapper. Figures \ref{fig:g1-map-dbscan} (standard Mapper) and \ref{fig:g1-dbmap-dbscan} (density-based Mapper) show the results using DBSCAN for clustering. The graphs' vertices were positioned at the average position of the data points in their corresponding cluster. Correct outputs are highlighted with green boxes.

Standard Mapper successfully recovered the topology in 3 out of 25 trials, while density-based Mapper succeeded in 10 out of 25 trials. Replacing DBSCAN with HDBSCAN further improved density-based Mapper's performance to 19/25 successful trials (Figure \ref{fig:g1-dbmap-hdbscan}). In contrast, standard Mapper performs much worse with HDBSCAN; we could not get Mapper to recover the correct topology with HDBSCAN for any choice of parameters. Code to reproduce these tests is provided in Appendix A.

\begin{figure}[H]
    \centering
    \includegraphics[width=0.8\textwidth]{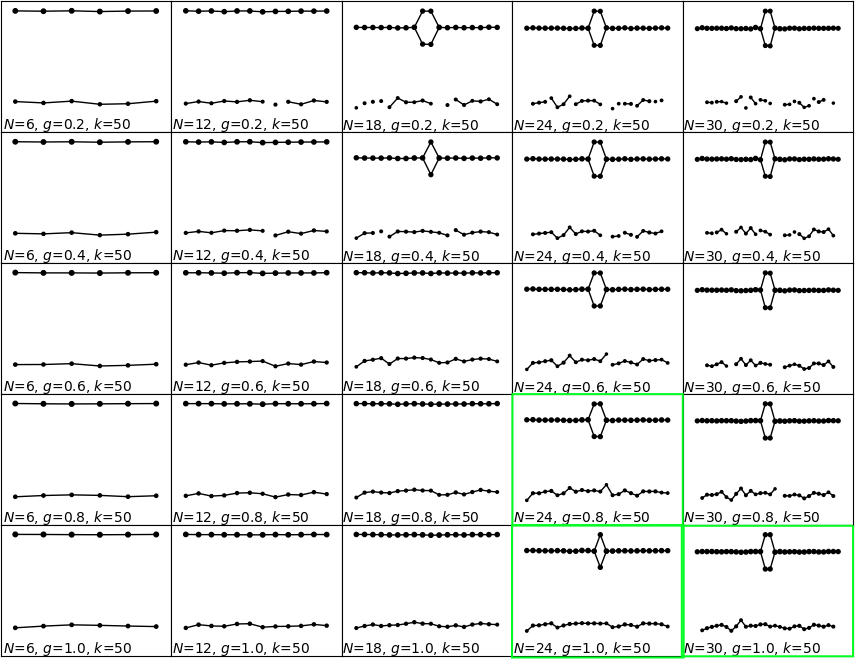}
    \caption{Mapper outputs for a range of parameter choices, with DBSCAN clustering.}%
    \label{fig:g1-map-dbscan}%
\end{figure}
\begin{figure}[H]
    \centering
    \includegraphics[width=0.8\textwidth]{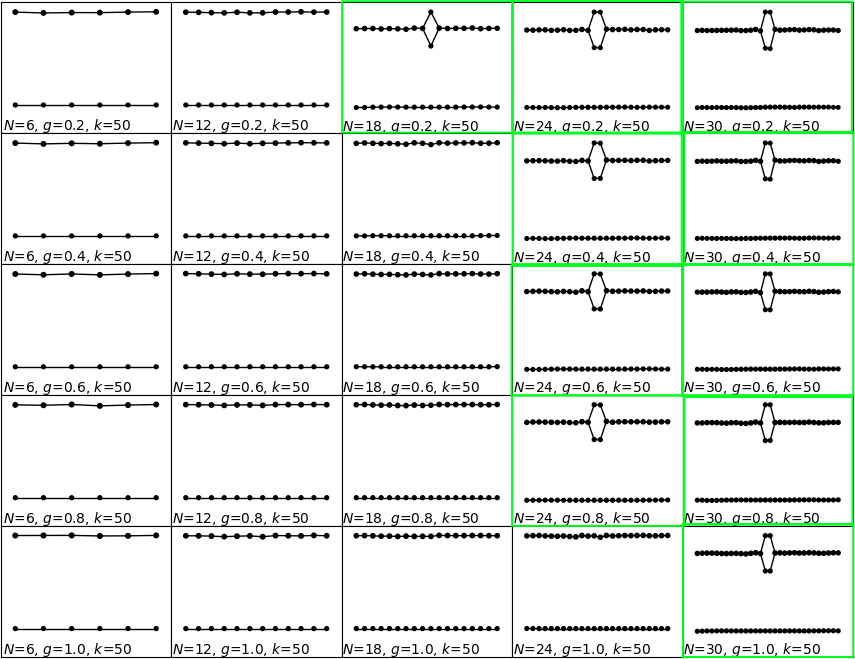}
    \caption{DBMapper outputs for a range of parameter choices, with DBSCAN clustering. }%
    \label{fig:g1-dbmap-dbscan}
\end{figure}
\begin{figure}[H]%
    \centering
    \includegraphics[width=0.8\textwidth]{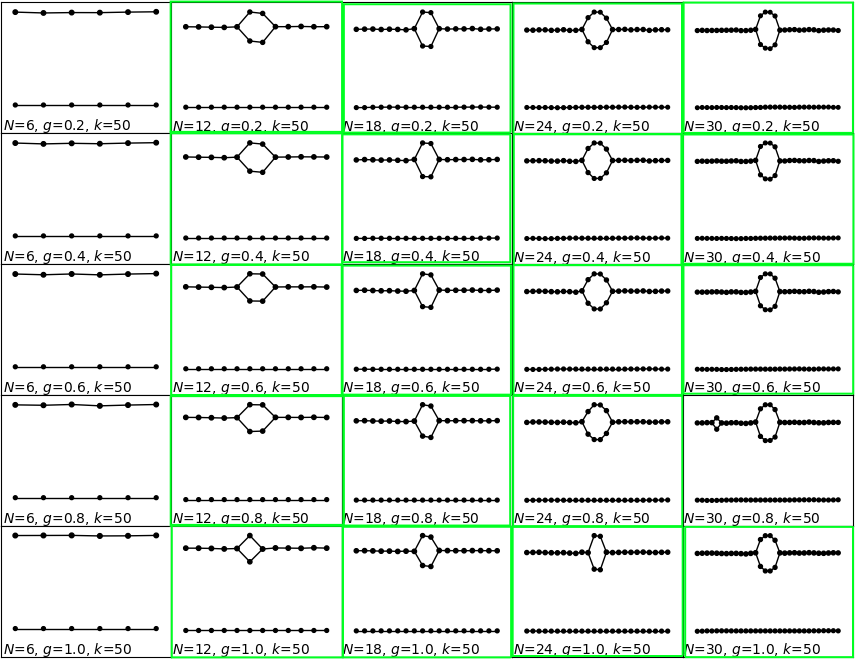}
    \caption{DBMapper outputs for a range of parameter choices, with HDBSCAN clustering.}%
    \label{fig:g1-dbmap-hdbscan}%
\end{figure}
\pagebreak

These figures also demonstrate the failure modes of Mapper and density-based Mapper. Both variants of Mapper can fail to find topological components, such as on the left side of all three figures where the graphs are missing the 1-dimensional component of the dataset. They can also find spurious topological components. This occurs in the top-right of Figure \ref{fig:g1-map-dbscan}, where there are many extra connected components, and for $N=30, g=0.8$ in Figure \ref{fig:g1-dbmap-hdbscan}, where there is an extra 1-dimensional component.

\section{Conclusions and Future Research}
\label{sec:conclusions}

Although our results are promising, there are many potential directions in which our work could be improved. Let us recapitulate with a summary of our proposal and conclusions, and then suggest avenues for future research.

\subsection{Conclusions}

In this work, we proposed the density-based Mapper algorithm for computing the persistent homology of a dataset $\XX_n\subset X$ with Morse-type function $f$. The two key ideas of the algorithm are:
\begin{enumerate}
    \item Generalise the open cover of $X$ to a kerneled cover.
    \item Scale the resolution of the kerneled cover proportionally to the local lens-space density of the dataset.
\end{enumerate}
The novel results of this work are:
\begin{enumerate}
    \item Partial generalisation of the work of \cite{carriere_structure_2018} to kerneled covers, which include:
    \begin{enumerate}
        \item Covers whose resolution varies in $X$;
        \item Covers by fuzzy sets.
    \end{enumerate}
    \item Improvement to Mapper's robustness for datasets with widely-varying lens-space density.
\end{enumerate}

First, we ensured that density-based Mapper preserves some of the important statistical guarantees that make Mapper appealing. In particular, we proved Theorem \ref{thm:main-bound}, which provides an upper bound on the bottleneck distance between density-based Mapper's result and the Reeb graph of $X$. 

Additionally, we provided preliminary experimental evidence that density-based Mapper improves upon standard Mapper for datasets that have widely-varying lens-space density. Specifically, our experiment found there is a wider range of resolution and overlap parameters where density-based Mapper will detect the correct topology compared to standard Mapper.

These changes make it easier to choose parameters for density-based Mapper. In a typical application, one does not know the topology of the dataset in advance. Therefore, it is not straightforward to know when Mapper has computed the correct persistent homology. There are heuristics to check this, such as comparing the results of Mapper across a range of parameters. If the features in the output are invariant under changes of parameters, we can be more confident that they come from the topology of the data, instead of being artifacts of the algorithm. By making density-based Mapper more robust to change of parameters, our proposal makes it easier to obtain output that is invariant to changes of parameters.

In addition to this article, we have released a reference implementation which is geared towards application to temporal topic modelling, accessible at:

{\hfill \href{https://github.com/tutteinstitute/temporal-mapper}{https://github.com/tutteinstitute/temporal-mapper}. \hfill}

\subsection{Future Research}

This work was motivated by temporal topic modelling, and our initial attempts to apply density-based Mapper to our problems of interest revealed many new challenges to solve. The results of our initial applied experiments are inconclusive -- density-based Mapper produces reasonable output graphs, however it is difficult to assess their correctness or to visualize them. To obtain meaningful analysis of temporal topic information using density-based Mapper, more research is required on extracting and visualizing features of interest from the output graph. 

Another avenue for improvement in applications is to combine this work with other efforts to robustify Mapper. In particular, $G$-Mapper \citep{alvarado_g-mapper_2023} provides a method for selecting the resolution $r$. We suspect that using their method to select an initial resolution, then varying it according to lens-space density, will be a highly effective way of choosing Mapper parameters.

There are also interesting theoretical questions remaining. Recently, it was shown that for every graph, there is a choice of Morse-type function $f$ and cover $\cU$ that exhibits the graph as a Mapper graph \citep{alvarado_any_2024}. Our work suggests that for a fixed $f$, there may be graphs which are not Mapper graphs, but are density-based Mapper graphs. It would be interesting to find such an example, or conversely a construction of Mapper parameters for the given $f$ which can produce any given density-based Mapper graph.

Furthermore, when using a clustering algorithm that permits weighting the input points, the shape of kernel used in density-based Mapper has some effect on the output graph, and this effect is not well-understood. More research is required to understand this relationship, and to determine the practical trade-offs of using different kernel shapes.

Finally, for datasets $\XX_n\subset X$ whose points are uniformly distributed in $L=f(X)$, density-based Mapper reduces to regular Mapper. Thus, it may be possible to prove that density-based Mapper is equivalent to a composition of a uniformization process similar to UMAP followed by Mapper. If this turns out to be true, it may suggest a natural way to generalize density-based Mapper to $d$-dimensional lens functions.

\acks{Funding supporting this work: Province of Ontario Graduate Scholarship. K.R. was employed by the Government of Canada during part of this work.}

\bibliography{dbmapper.bib}

\appendix
\section{Code to Reproduce Computational Experiments.}
In this appendix we've included python 3 code snippets which can be used to reproduce the 
computational experiments described in Section \ref{sec:experiments}. 
\subsection{Synthetic Data Generation}
The code in this section was used to generate the synthetic data in the experiments. A working environment should require only python 3, and recent versions of numpy and matplotlib. 

The following code block generates Figure \ref{fig:data-pullback}.
\lstinputlisting[language=Python]{datagen_2.py}

The following code block generates Figure \ref{fig:data1}.
\lstinputlisting[language=Python]{datagen_1.py}

\subsection{Running the Reference Implementation}
Here we provide code snippets used to run our reference implementation of DBMapper in the experiments. The reference implementation can be downloaded from GitHub, at (link here). For this article, the version used is v 0.4.0, which can be obtained from the Releases section of the GitHub page.

The following code block generates Figure \ref{fig:pullback}.
\lstinputlisting[language=Python]{pullback.py}

The following code block generates Figures \ref{fig:g1-map-dbscan}-\ref{fig:g1-dbmap-hdbscan}.
\lstinputlisting[language=Python]{genus1test.py}
\end{document}